\newtheorem{theorem}{Theorem}[section]
\newtheorem{lemma}[theorem]{Lemma}
\newtheorem{proposition}[theorem]{Proposition}
\newtheorem{corollary}[theorem]{Corollary}
\theoremstyle{definition}
\newtheorem{definition}[theorem]{Definition}
\newtheorem{remark}[theorem]{Remark}
\DeclareMathOperator*{\argmin}{argmin}
\numberwithin{equation}{section}
\numberwithin{table}{section}
\def \bB {\mathbb{B}}
\def \bD {\mathbb{D}}
\def \bE {\mathbb{E}}
\def \bN {\mathbb{N}}
\def \bR {\mathbb{R}}
\def \bS {\mathbb{S}}
\def \bY {\mathbb{Y}}
\def \bZ {\mathbb{Z}}
\def \cD {\mathcal{D}}
\def \cE {\mathcal{E}}
\def \cF {\mathcal{F}}
\def \cG {\mathcal{G}}
\def \cH {\mathcal{H}}
\def \cL {\mathcal{L}}
\def \cN {\mathcal{N}}
\def \cO {\mathcal{O}}
\def \cP {\mathcal{P}}
\def \cR {\mathcal{R}}
\def \cT {\mathcal{T}}
\def \cW {\mathcal{W}}
\def \NN {\mathcal{NN}}
\def \CNN {\mathcal{CNN}}
\def \sgn {\,{\rm sgn}\,}
\def \Pdim {\,{\rm Pdim}\,}
\def\blfootnote{\xdef\@thefnmark{}\@footnotetext}
\begin{document}
\title{Optimal rates of approximation by shallow ReLU$^k$ neural networks and applications to nonparametric regression}
\author{
Yunfei Yang \thanks{Department of Mathematics, City University of Hong Kong, Kowloon, Hong Kong, China. (E-mail: yunfyang@cityu.edu.hk)}
\and
Ding-Xuan Zhou \thanks{School of Mathematics and Statistics, University of Sydney, Sydney, NSW 2006, Australia. (E-mail: dingxuan.zhou@sydney.edu.au)}
}
\date{}
\maketitle

\begin{abstract}
We study the approximation capacity of some variation spaces corresponding to shallow ReLU$^k$ neural networks. It is shown that sufficiently smooth functions are contained in these spaces with finite variation norms. For functions with less smoothness, the approximation rates in terms of the variation norm are established. Using these results, we are able to prove the optimal approximation rates in terms of the number of neurons for shallow ReLU$^k$ neural networks. It is also shown how these results can be used to derive approximation bounds for deep neural networks and convolutional neural networks (CNNs). As applications, we study convergence rates for nonparametric regression using three ReLU neural network models: shallow neural network, over-parameterized neural network, and CNN. In particular, we show that shallow neural networks can achieve the minimax optimal rates for learning H\"older functions, which complements recent results for deep neural networks. It is also proven that over-parameterized (deep or shallow) neural networks can achieve nearly optimal rates for nonparametric regression.

\smallskip
\noindent \textbf{Keywords:} Neural Network, Approximation Rate, Nonparametric Regression, Spherical Harmonic

\noindent \textbf{Mathematics Subject Classification:} 41A25, 62G08, 68T07
\end{abstract}

\section{Introduction}

Neural networks generate very popular function classes used in machine learning algorithms \cite{anthony2009neural,lecun2015deep}. The fundamental building blocks of neural networks are ridge functions (also called neurons) of the form $x \in \bR^d \mapsto \rho((x^\intercal,1)v)$, where $\rho:\bR\to \bR$ is a continuous activation function and $v\in \bR^{d+1}$ is a trainable parameter. It is well-known that a shallow neural network with non-polynomial activation
\begin{equation}\label{shallow NN}
f(x) = \sum_{i=1}^N a_i \rho((x^\intercal,1)v_i),
\end{equation}
is universal in the sense that it can approximate any continuous functions on any compact set with desired accuracy when the number of neurons $N$ is sufficiently large \cite{cybenko1989approximation,hornik1991approximation,pinkus1999approximation}. The approximation and statistical properties of neural networks with different architectures have also been widely studied in the literature \cite{yarotsky2017error,zhou2020universality,bartlett2019nearly,schmidthieber2020nonparametric}, especially when $\rho$ is a sigmoidal activation or $\rho$ is the ReLU$^k$ function $\sigma_k(t) = \max\{0,t\}^k$, the $k$-power of the rectified linear unit (ReLU) with $k\in \bN_0:=\bN\cup \{0\}$.

The main focus of this paper is rates of approximation by neural networks. For classical smooth function classes, such as H\"older functions, Mhaskar \cite{mhaskar1996neural} (see also \cite[Theorem 6.8]{pinkus1999approximation}) presented approximation rates for shallow neural networks, when the activation function $\rho \in C^\infty(\Omega)$ is not a polynomial on some open interval $\Omega$ (ReLU$^k$ does not satisfy this condition). It is known that the rates obtained by Mhaskar are optimal if the network weights are required to be continuous functions of the target function. Recently, optimal rates of approximation have also been established for deep ReLU neural networks \cite{yarotsky2017error,yarotsky2018optimal,shen2020deep,lu2021deep}, even without the continuity requirement on the network weights. All these approximation rates are obtained by using the idea that one can construct neural networks to approximate polynomials efficiently. There is another line of works \cite{barron1993universal,makovoz1996random,klusowski2018approximation,siegel2020approximation,siegel2022sharp} studying the approximation rates for functions of certain integral forms (such as (\ref{infinite width NN})) by using a random sampling argument due to Maurey \cite{pisier1981remarques}. In particular, Barron \cite{barron1993universal} derived dimension independent approximation rates for sigmoid type activations and functions $h$, whose Fourier transform $\widehat{h}$ satisfies $\int_{\bR^d} |\omega| |\widehat{h}(\omega)| d\omega<\infty$. This result has been improved and generalized to ReLU activation in recent articles \cite{klusowski2018approximation,siegel2020approximation,siegel2022sharp}.

In this paper, we continue the study of these two lines of approximation theories for neural networks (i.e. the constructive approximation of smooth functions and the random approximation of integral representations). Our main result shows how well integral representations corresponding to ReLU$^k$ neural networks can approximate smooth functions. By combining this result with the random approximation theory of integral forms, we are able to establish the optimal rates of approximation for shallow ReLU$^k$ neural networks. Specifically, we consider the following function class defined on the unit ball $\bB^d$ of $\bR^d$ induced by vectors on unit sphere $\bS^d$ of $\bR^{d+1}$ as
\begin{equation}\label{infinite width NN}
\cF_{\sigma_k}(M) := \left\{ f(x) = \int_{\bS^d} \sigma_k((x^\intercal,1)v) d\mu(v): \|\mu\| \le M, x\in \bB^d \right\},
\end{equation}
which can be regarded as a shallow ReLU$^k$ neural network with infinite width \cite{bach2017breaking}. The restriction on the total variation $\|\mu\| :=|\mu|(\bS^d) \le M$ gives a constraint on the size of the weights in the network. We study how well $\cF_{\sigma_k}(M)$ approximates the unit ball of H\"older class $\cH^\alpha$ with smoothness index $\alpha>0$ as $M\to \infty$. Roughly speaking, our main theorem shows that, if $\alpha> (d+2k+1)/2$, then $\cH^\alpha \subseteq \cF_{\sigma_k}(M)$ for some constant $M$ depending on $k,d,\alpha$, and if $\alpha< (d+2k+1)/2$, we obtain the approximation bound
\[
\sup_{h\in \cH^\alpha} \inf_{f\in \cF_{\sigma_k}(M)} \|h-f\|_{L^\infty(\bB^d)} \lesssim M^{-\frac{2\alpha}{d+2k+1-2\alpha}},
\]
where, for two quantities $X$ and $Y$, $X \lesssim Y$ (or $Y \gtrsim X$) denotes the statement that $X\le CY$ for some constant $C>0$ (we will also denote $X \asymp Y$ when $X \lesssim Y \lesssim X$).
In other words, sufficiently smooth functions are always contained in the shallow neural network space $\cF_{\sigma_k}(M)$. And, for less smooth functions, we can characterize the approximation error by the variation norm. Furthermore, combining our result with the random approximation bounds from \cite{bach2017breaking,siegel2022sharp,siegel2023optimal}, we are able to prove that shallow ReLU$^k$ neural network of the form (\ref{shallow NN}) achieves the optimal approximation rate $\cO(N^{-\alpha/d})$ for $\cH^\alpha$ with $\alpha< (d+2k+1)/2$, which generalizes the result of Mhaskar \cite{mhaskar1996neural} to ReLU$^k$ activation. 

In addition to shallow neural networks, we can also apply our results to derive approximation bounds for multi-layer neural networks and convolutional neural networks (CNNs) when $k=1$ (ReLU activation $\sigma:=\sigma_1$). These approximation bounds can then be used to study the performances of machine learning algorithms based on neural networks \cite{anthony2009neural}. Here, we illustrate the idea by studying the nonparametric regression problem. The goal of this problem is to learn a function $h$ from a hypothesis space $\cH$ from its noisy samples
\[
Y_i = h(X_i) + \eta_i, \quad i=1,\dots, n,
\]
where $X_i$ is sampled from an unknown probability distribution $\mu$ and $\eta_i$ is Gaussian noise. One popular algorithm for solving this problem is the empirical least square minimization
\[
\argmin_{f\in \cF_n} \frac{1}{n} \sum_{i=1}^n |f(X_i)- Y_i|^2,
\]
where $\cF_n$ is an appropriately chosen function class. For instance, in deep learning, $\cF_n$ is parameterized by deep neural networks and one solves the minimization by (stochastic) gradient descent methods. Assuming that we can compute a minimizer $f_n^* \in \cF_n$, the performance of the algorithm is often measured by the square loss $\|f_n^*-h\|_{L^2(\mu)}^2$. A fundamental question in learning theory is to determine the convergence rate of the error $\|f_n^*-h\|_{L^2(\mu)}^2 \to 0$ as the sample size $n\to \infty$. The error can be decomposed into two components: approximation error and generalization error (also called estimation error). For neural network models $\cF_n$, our results provide bounds for the approximation errors, while the generalization errors can be bounded by the complexity of the models \cite{shalevshwartz2014understanding,mohri2018foundations}. We study the cases $\cH = \cH^\alpha$ with $\alpha< (d+3)/2$ or $\cH=\cF_{\sigma}(1)$ for three ReLU neural network models: shallow neural network, over-parameterized neural network, and CNN. The models and our contributions are summarized as follows:
\begin{enumerate}[label=\textnormal{(\arabic*)},parsep=0pt]
\item Shallow ReLU neural network $\cF_{\sigma}(N,M)$, where $N$ is the number of neurons and $M$ is a bound for the variation norm that measures the size of the weights. We prove optimal approximation rates (in terms of $N$) for this model. It is also shown that this model can achieve the optimal convergence rates for learning $\cH^\alpha$ and $\cF_{\sigma}(1)$, which complements the recent results for deep neural networks \cite{schmidthieber2020nonparametric,kohler2021rate}.

\item Over-parameterized (deep or shallow) ReLU neural network $\NN(W,L,M)$ studied in \cite{jiao2023approximation}, where $W,L$ are the width and depth respectively, and $M$ is a constraint on the weight matrices. For fixed depth $L$, the generalization error for this model can be controlled by $M$ \cite{jiao2023approximation}. When $\cH=\cH^\alpha$, we characterize the approximation error by $M$, and allow the width $W$ to be arbitrary large so that the model can be over-parameterized (the number of parameters is larger than the number of samples). When $\cH=\cF_{\sigma}(1)$, we can simply increase the width to reduce the approximation error so that the model can also be over-parameterized. Our result shows that this model can achieve nearly optimal convergence rates for learning $\cH^\alpha$ and $\cF_{\sigma}(1)$. Both the approximation and convergence rates improve the results of \cite{jiao2023approximation}.

\item Sparse convolutional ReLU neural network $\CNN(s,L)$ introduced by \cite{zhou2020universality}, where $L$ is the depth and $s\ge 2$ is a fixed integer that controls the filter length. This model is shown to be universal for approximation \cite{zhou2020universality} and universal consistent for regression \cite{lin2022universal}. We improve the approximation bound in \cite{zhou2020universality} and give new convergence rates of this model for learning $\cH^\alpha$ and $\cF_{\sigma}(1)$.
\end{enumerate}
The approximation rates and convergence rates of nonparametric regression for these models are summarized in Table \ref{table}, where we use the notation $a\lor b:= \max\{a,b\}$.

\renewcommand{\arraystretch}{1.4}
\begin{table}[htbp]
\centering
\begin{tabular}{|c|c|c|c|c|}
\hline
\multirow{2}{*}{Model} & \multicolumn{2}{|c|}{Approximation} & \multicolumn{2}{|c|}{Nonparametric regression} \\
\cline{2-5}
&$\cH^\alpha,\alpha<\frac{d+3}{2}$ & $\cF_{\sigma}(1)$ & $\cH^\alpha,\alpha<\frac{d+3}{2}$ & $\cF_{\sigma}(1)$ \\
\hline
$\cF_{\sigma}(N,M)$ & $N^{-\frac{\alpha}{d}} \lor M^{-\frac{2\alpha}{d+3-2\alpha}}$ & $N^{-\frac{1}{2} - \frac{3}{2d}}$ & $n^{-\frac{2\alpha}{d+2\alpha}}$ & $n^{-\frac{d+3}{2d+3}}$ \\ 
\hline
$\NN(W,L,M)$ & $W^{-\frac{\alpha}{d}} \lor M^{-\frac{2\alpha}{d+3-2\alpha}}$ & $W^{-\frac{1}{2} - \frac{3}{2d}}$ & $n^{-\frac{2\alpha}{d+3+2\alpha}}$ & $n^{-\frac{1}{2}}$ \\ 
\hline
$\CNN(s,L)$ & $L^{-\frac{\alpha}{d}}$ & $L^{-\frac{1}{2} - \frac{3}{2d}}$ & $n^{-\frac{\alpha}{d+\alpha}}$ & $n^{-\frac{d+3}{3d+3}}$ \\ 
\hline
\end{tabular}
\caption{Approximation rates and convergence rates of nonparametric regression for three neural network models, ignoring logarithmic factors.}
\label{table}
\end{table}

The rest of the paper is organized as follows. In Section \ref{sec: app}, we present our approximation results for shallow neural networks. Section \ref{sec: proof} gives a proof of our main theorem. In Section \ref{sec: regression}, we apply our approximation results to study these neural network models and derive convergence rates for nonparametric regression using these models. Section \ref{sec: conclusion} concludes this paper with a discussion on possible future directions of research.

\section{Approximation rates for shallow neural networks}\label{sec: app}

Let us begin with some notations for function classes. Let $\bB^d= \{x\in \bR^d:\|x\|_2\le 1\}$ and $\bS^{d-1}=\{x\in\bR^d:\|x\|_2=1\}$ be the unit ball and the unit sphere of $\bR^d$. We are interested in functions of the integral form
\begin{equation}\label{general form}
f(x) = \int_{\bS^d} \sigma_k((x^\intercal,1)v) d\mu(v), \quad x\in \bB^d,
\end{equation}
where $\mu$ is a signed Radon measure on $\bS^d$ with finite total variation $\|\mu\|:=|\mu|(\bS^d)<\infty$ and $\sigma_k(t) := \max\{t,0\}^k$ with $k\in \bN_0:=\bN\cup\{0\}$ is the ReLU$^k$ function (when $k=0$, $\sigma_0(t)$ is the Heaviside function). For simplicity, we will also denote the ReLU function by $\sigma:=\sigma_1$. The variation norm $\gamma(f)$ of $f$ is the infimum of $\|\mu\|$ over all decompositions of $f$ as (\ref{general form}) \cite{bach2017breaking}. By the compactness of $\bS^d$, the infimum can be attained by some signed measure $\mu$. We denote $\cF_{\sigma_k}(M)$ as the function class that contains all functions $f$ in the form (\ref{general form}) whose variation norm $\gamma(f) \le M$, see (\ref{infinite width NN}). The class $\cF_{\sigma_k}(M)$ can be thought of as an infinitely wide neural network with a constraint on its weights. The variation spaces corresponding to shallow neural networks have been studied by many researchers. We refer the reader to \cite{savarese2019how,ongie2020function,bartolucci2023understanding,parhi2022what,parhi2023minimax,siegel2023optimal,siegel2020approximation,siegel2022sharp,siegel2021characterization} for several other definitions and characterizations of these spaces.

We will also need the notion of classical smoothness of functions on Euclidean space. Given a smoothness index $\alpha>0$, we write $\alpha=r+\beta$ where $r\in \bN_0$ and $\beta \in(0,1]$. Let $C^{r,\beta}(\bR^d)$ be the H\"older space with the norm
\[
\|f\|_{C^{r,\beta}(\bR^d)} := \max\left\{ \|f\|_{C^r(\bR^d)}, \max_{\|s\|_1=r}|\partial^s f|_{C^{0,\beta}(\bR^d)} \right\},
\]
where $s=(s_1,\dots,s_d) \in \bN_0^d$ is a multi-index and 
\[
\|f\|_{C^r(\bR^d)} := \max_{\|s\|_1\le r} \|\partial^s f\|_{L^\infty(\bR^d)}, \quad |f|_{C^{0,\beta}(\bR^d)} := \sup_{x\neq y\in \bR^d} \frac{|f(x)-f(y)|}{\|x-y\|_2^\beta}.
\]
Here we use $\|\cdot\|_{L^\infty}$ to denote the supremum norm, since we mainly consider continuous functions. We write $C^{r,\beta}(\bB^d)$ for the Banach space of all
restrictions to $\bB^d$ of functions in $C^{r,\beta}(\bR^d)$. The norm is given by $\|f\|_{C^{r,\beta}(\bB^d)} = \inf\{ \|g\|_{C^{r,\beta}(\bR^d)}: g\in C^{r,\beta}(\bR^d) \mbox{ and } g=f \mbox{ on } \bB^d\}$. For convenience, we will denote the unit ball of $C^{r,\beta}(\bB^d)$ by 
\[
\cH^\alpha:= \left\{ f\in C^{r,\beta}(\bB^d): \|f\|_{C^{r,\beta}(\bB^d)}\le 1 \right\}.
\]
Note that, for $\alpha=1$, $\cH^\alpha$ is a class of Lipschitz continuous functions.

Due to the universality of shallow neural networks \cite{pinkus1999approximation}, $\cF_{\sigma_k}(M)$ can approximate any continuous functions on $\bB^d$ if $M$ is sufficiently large. Our main theorem estimates the rate of this approximation for H\"older class.

\begin{theorem}\label{app norm}
Let $k\in \bN_0$, $d\in \bN$ and $\alpha>0$. If $\alpha> (d+2k+1)/2$ or $\alpha= (d+2k+1)/2$ is an even integer, then $\cH^\alpha \subseteq \cF_{\sigma_k}(M)$ for some constant $M$ depending on $k,d,\alpha$. Otherwise,
\begin{align*}
\sup_{h\in \cH^\alpha} \inf_{f\in \cF_{\sigma_k}(M)} \|h-f\|_{L^\infty(\bB^d)} \lesssim
\begin{cases}
\exp(-\alpha M^2), &\mbox{ if } \alpha= (d+2k+1)/2 \mbox{ and } \alpha/2 \notin \bN, \\
M^{-\frac{2\alpha}{d+2k+1-2\alpha}}, &\mbox{ if } \alpha< (d+2k+1)/2,
\end{cases}
\end{align*}
where the implied constants only depend on $k,d,\alpha$.
\end{theorem}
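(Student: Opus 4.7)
My plan is to carry out the argument on the sphere $\bS^d$ using the lift $\phi(x) := (x^\intercal,1)^\intercal/\sqrt{1+\|x\|_2^2}$, which maps $\bB^d$ bijectively onto the open upper hemisphere of $\bS^d$. By the positive homogeneity of $\sigma_k$,
\[
\sigma_k((x^\intercal,1) v) = (1+\|x\|_2^2)^{k/2}\, \sigma_k(\phi(x)\cdot v),
\]
and the prefactor is a fixed smooth, positive function on $\bB^d$ which does not affect approximability. It therefore suffices to approximate $\tilde h(x):=h(x)(1+\|x\|_2^2)^{-k/2}$ by integrals of the form $\int_{\bS^d}\sigma_k(\phi(x)\cdot v)\,d\mu(v)$ on the upper hemisphere. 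I would first extend $\tilde h$ to a function $H \in C^{r,\beta}(\bS^d)$ of comparable norm via a standard reflect-and-smooth-cutoff construction across the equator, arranging the parity of $H$ to be compatible with the ridge activation.

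The central analytical tool is the Funk--Hecke expansion. Writing $H = \sum_{n\ge 0} H_n$ for the spherical harmonic projections, the identity
\[
\int_{\bS^d}\sigma_k(\xi\cdot v)\,H_n(v)\,d\sigma(v) = \lambda_{n,k}\, H_n(\xi)
\]
suggests taking the formal density $g(v) := \sum_n \lambda_{n,k}^{-1} H_n(v)$ as the representing kernel of $H$. A direct computation --- based on the decomposition of $\sigma_k$ into a polynomial summand plus the endpoint-singular term $|t|^k$ (odd $k$) or $t\,|t|^{k-1}$ (even $k$), together with the classical Gegenbauer coefficients of those terms --- yields $|\lambda_{n,k}|\asymp n^{-(d+2k+1)/2}$ for all large $n$ of one fixed parity and $\lambda_{n,k}=0$ for the other. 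The parity of the extension $H$ is selected so that only the nonzero $\lambda_{n,k}$ contribute; the finitely many "wrong parity" exceptions are absorbed into a polynomial correction, which lies in $\cF_{\sigma_k}(M)$ by a direct finite-neuron construction.

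The quantitative conclusions follow by controlling $\|g\|_{L^1(\bS^d)}$, which dominates the variation norm. Hölder regularity of $H$ of order $\alpha$ forces decay of the projections $H_n$ via Bernstein--Jackson inequalities on the sphere, and together with $|\lambda_{n,k}|^{-1}\asymp n^{(d+2k+1)/2}$ one checks that the series defining $g$ converges in $L^1(\bS^d)$ precisely when $\alpha>(d+2k+1)/2$; this gives the inclusion $\cH^\alpha \subseteq \cF_{\sigma_k}(M)$. For $\alpha<(d+2k+1)/2$, I would truncate the expansion at a level $N$: the partial sum lies in $\cF_{\sigma_k}(M_N)$ with $M_N\asymp N^{(d+2k+1-2\alpha)/2}$, while its $L^\infty$ distance to $H$ is $\cO(N^{-\alpha})$, and balancing these quantities gives the rate $M^{-2\alpha/(d+2k+1-2\alpha)}$.

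The hardest step will be the sharp analysis of the coefficients $\lambda_{n,k}$: their precise asymptotics, their parity-dependent vanishing, and the arithmetic exception when $(d+2k+1)/2$ is an even integer, where a cancellation in the explicit Gamma-function formula for $\lambda_{n,k}$ removes the logarithmic divergence of the formal series and restores full inclusion. The same fine book-keeping governs the borderline case $\alpha=(d+2k+1)/2$ with $\alpha/2\notin\bN$: a logarithmic blow-up of $M_N$ against a polynomial tail $N^{-\alpha}$, after the truncate-and-optimize step, produces the exponential bound $\exp(-\alpha M^2)$ instead of a polynomial rate. A secondary technical issue is ensuring that the extension $H$ preserves both the Hölder norm and the parity required to match the nonvanishing Funk--Hecke modes.
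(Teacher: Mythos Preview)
Your overall strategy matches the paper's: lift to $\bS^d$, diagonalize the integral operator via Funk--Hecke, use the asymptotics $|\lambda_{n,k}|\asymp n^{-(d+2k+1)/2}$ with the correct parity pattern, truncate, and balance. However, your account of the even-integer exception at $\alpha=(d+2k+1)/2$ is incorrect and would not go through. The coefficients $\lambda_{n,k}$ depend only on $n,k,d$, never on $\alpha$, so no ``cancellation in the explicit Gamma-function formula for $\lambda_{n,k}$'' can single out even-integer values of $\alpha$. The actual mechanism lies entirely on the smoothness side: from $h\in\cH^\alpha$ one obtains directly only $\omega_{2s^*}(\widetilde h,t)_\infty\lesssim t^\alpha$ for the smallest even integer $2s^*\ge\alpha$. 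Transferring this to $\omega_s(\widetilde h,t)_2$ at the critical order $s=(d+2k+1)/2$ uses Marchaud's inequality, which inserts a factor $\sqrt{\log(1/t)}$ precisely when $s=\alpha\ne 2s^*$; when $\alpha=2s^*$ is even one bypasses Marchaud and the logarithm disappears. This, not any feature of $\lambda_{n,k}$, is what drives both the even-integer inclusion and the exponential rate at the borderline.

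A secondary gap: your claimed bound $M_N\asymp N^{(d+2k+1-2\alpha)/2}$ is the right exponent, but it does \emph{not} follow from individual decay $\|H_n\|_{L^2}\lesssim n^{-\alpha}$. That pointwise bound gives only $\sum_{n\le N}|\lambda_{n,k}|^{-2}\|H_n\|_{L^2}^2\lesssim N^{d+2k+2-2\alpha}$, one power of $N$ too many. The paper recovers the sharp exponent by bounding $\|(-\Delta)^{s/2}g_m\|_{L^2}$ directly through the $K$-functional equivalence $K_s(\widetilde h,m^{-1})_2\asymp\|\widetilde h-g_m\|_2+m^{-s}\|(-\Delta)^{s/2}g_m\|_2$; equivalently, one needs the dyadic Besov-block estimate $\sum_{2^j\le n<2^{j+1}}\|H_n\|_{L^2}^2\lesssim 2^{-2j\alpha}$ rather than the termwise one. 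Relatedly, both this norm control and the $L^\infty$ tail bound $\cO(N^{-\alpha})$ require a smoothed (de la Vall\'ee Poussin) truncation $g_m=\widetilde h*L_m$, not the sharp partial sum you describe.
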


The proof of Theorem \ref{app norm} is deferred to the next section. Our proof uses similar ideas as \cite[Proposition 3]{bach2017breaking}, which obtained the same approximation rate for $\alpha=1$ (with an additional logarithmic factor). The conclusion is more complicated for the critical value $\alpha=(d+2k+1)/2$. We think this is due to the proof technique and conjecture that $\cH^\alpha \subseteq \cF_{\sigma_k}(M)$ for all $\alpha \ge (d+2k+1)/2$, see Remark \ref{smoothness remark}. Nevertheless, in practical applications of machine learning, the dimension $d$ is large and it is reasonable to expect that $\alpha<(d+2k+1)/2$.

In order to apply Theorem \ref{app norm} to shallow neural networks with finite neurons, we can approximate $\cF_{\sigma_k}(M)$ by the subclass 
\[
\cF_{\sigma_k}(N,M) := \left\{ f(x) = \sum_{i=1}^N a_i\sigma_k((x^\intercal,1)v_i): v_i \in \bS^d, \sum_{i=1}^N |a_i|\le M \right\},
\]
where we restrict the measure $\mu$ to be a discrete one supported on at most $N$ points. The next proposition shows that any function in $\cF_{\sigma_k}(M)$ is the limit of functions in  $\cF_{\sigma_k}(N,M)$ as $N\to \infty$.

\begin{proposition}\label{equivalence}
For $k\in \bN_0$, $\cF_{\sigma_k}(1)$ is the closure of $\cup_{N\in\bN} \cF_{\sigma_k}(N,1)$ in $L^\infty(\bB^d)$.
\end{proposition}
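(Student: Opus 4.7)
The plan is to prove two set inclusions, $\bigcup_N \cF_{\sigma_k}(N,1)\subseteq \cF_{\sigma_k}(1)$ and $\cF_{\sigma_k}(1)\subseteq \overline{\bigcup_N \cF_{\sigma_k}(N,1)}$ in $L^\infty(\bB^d)$, together with the fact that $\cF_{\sigma_k}(1)$ is $L^\infty$-closed, from which the proposition follows. The first inclusion is immediate, since any $\sum_{i=1}^N a_i \sigma_k((x^\intercal,1)v_i)$ with $\sum_i|a_i|\le 1$ is represented by the discrete signed measure $\mu=\sum_i a_i\delta_{v_i}$ of total variation $\sum_i|a_i|\le 1$.

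For the density inclusion I would approximate the representing measure by a discrete one through a fine partition of $\bS^d$. Fix $f(x)=\int_{\bS^d}\sigma_k((x^\intercal,1)v)\,d\mu(v)$ with $\|\mu\|\le 1$ and $\epsilon>0$. Choose a Borel partition $\bS^d=\bigsqcup_{i=1}^N A_i$ of diameter at most $\epsilon$, pick $v_i\in A_i$, and set $a_i:=\mu(A_i)$ and $f_N(x):=\sum_i a_i\sigma_k((x^\intercal,1)v_i)$. Then $\sum_i|a_i|\le|\mu|(\bS^d)\le 1$, so $f_N\in \cF_{\sigma_k}(N,1)$, and
\[
\|f-f_N\|_{L^\infty(\bB^d)}\le \sup_{x\in\bB^d}\sum_i\int_{A_i}\bigl|\sigma_k((x^\intercal,1)v)-\sigma_k((x^\intercal,1)v_i)\bigr|\,d|\mu|(v).
\]
For $k\ge 1$ the bound $|(x^\intercal,1)v|\le\sqrt{2}$ shows that $v\mapsto\sigma_k((x^\intercal,1)v)$ is Lipschitz on $\bS^d$ with a constant uniform in $x\in\bB^d$, so the right-hand side is $O(\epsilon)$ and $f_N\to f$ in $L^\infty(\bB^d)$ as $\epsilon\to 0$.

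For the closedness of $\cF_{\sigma_k}(1)$, suppose $f_n=T\mu_n\to f$ uniformly on $\bB^d$, where $T\mu(x):=\int_{\bS^d}\sigma_k((x^\intercal,1)v)\,d\mu(v)$ and $\|\mu_n\|\le 1$. The Banach-Alaoglu theorem applied to $M(\bS^d)=C(\bS^d)^*$ produces a subsequence $\mu_{n_j}$ converging in the weak-$*$ topology to some $\mu$ with $\|\mu\|\le\liminf_j\|\mu_{n_j}\|\le 1$. For $k\ge 1$ the kernel $v\mapsto\sigma_k((x^\intercal,1)v)$ is continuous on $\bS^d$ for each fixed $x\in\bB^d$, so $(T\mu_{n_j})(x)\to(T\mu)(x)$ pointwise, and this limit must coincide with $f(x)$. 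Hence $f=T\mu\in \cF_{\sigma_k}(1)$.

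The main obstacle is the case $k=0$, where $\sigma_0$ is the Heaviside function and $v\mapsto\sigma_0((x^\intercal,1)v)$ is discontinuous on the equator $\{v\in\bS^d:(x^\intercal,1)v=0\}$, so neither the Lipschitz bound used for density nor the weak-$*$ passage used for closedness applies directly. For the density step I would replace the partition by a Glivenko--Cantelli / Maurey-type random-sampling argument: the class $\{\sigma_0((\cdot^\intercal,1)v):v\in\bS^d\}$ is a uniformly bounded VC class of half-space indicators on $\bB^d$, so empirical averages of i.i.d.\ draws from the normalized Hahn components of $\mu$ converge uniformly on $\bB^d$ in probability, producing the required discrete approximants. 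For closedness one can approximate $\sigma_0$ pointwise by continuous ramp activations $\sigma_0^\varepsilon$, apply the $k\ge 1$ reasoning at each scale, and pass $\varepsilon\to 0$ via dominated convergence, checking that the thin bands $\{0<(x^\intercal,1)v<\varepsilon\}$ carry asymptotically negligible $|\mu|$-mass.
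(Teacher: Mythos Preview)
For $k\ge 1$ your argument is correct and close in spirit to the paper's, with one pleasant difference: for the density step you use a deterministic partition-of-$\bS^d$ argument exploiting the Lipschitz continuity of $v\mapsto\sigma_k((x^\intercal,1)v)$, whereas the paper uses a random-sampling/symmetrization argument via Rademacher complexity. Your route is more elementary; the paper's route yields a quantitative rate $\cE_k(N)\lesssim N^{-1/2}$ as a byproduct. For closedness your Banach--Alaoglu argument is essentially identical to the paper's use of Prokhorov.

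For $k=0$ your density sketch (Hahn decomposition plus VC/Glivenko--Cantelli sampling) is exactly the paper's approach. However, your closedness argument for $k=0$ has a real gap. The ramp-approximation scheme requires interchanging the limits $j\to\infty$ and $\varepsilon\to 0$, and the thin-band condition you propose on $|\mu|$ cannot justify this. First, nothing prevents $|\mu|$ from charging the equator $\{v:(x^\intercal,1)v=0\}$ for some $x$ (take $\mu=\delta_{v_0}$; then every $x$ with $(x^\intercal,1)v_0=0$ is bad), so even $T^\varepsilon\mu(x)\to T\mu(x)$ can fail pointwise. Second, and more seriously, you have no control over $|\mu_{n_j}|$ on the bands $\{0<(x^\intercal,1)v<\varepsilon\}$, so you cannot pass from $T\mu_{n_j}$ to $T^\varepsilon\mu_{n_j}$ uniformly in $j$; without that uniformity the double limit does not commute.

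The paper avoids the pointwise viewpoint entirely for $k=0$: it observes that although $v\mapsto\sigma_0((x^\intercal,1)v)$ is discontinuous for fixed $x$, the map $v\mapsto g_v:=\sigma_0((\cdot^\intercal,1)v)$ is \emph{continuous} from $\bS^d$ into $L^2(\bB^d)$ (nearby half-spaces have indicators close in $L^2$). One then views $T\mu$ as a Bochner integral in $L^2(\bB^d)$; weak-$*$ convergence $\mu_{n_j}\to\mu$ gives $\langle g,T\mu_{n_j}\rangle_{L^2}=\int_{\bS^d}\langle g,g_v\rangle_{L^2}\,d\mu_{n_j}(v)\to\langle g,T\mu\rangle_{L^2}$ for every $g\in L^2(\bB^d)$, since $v\mapsto\langle g,g_v\rangle_{L^2}$ is continuous. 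Combined with $T\mu_{n_j}\to f$ in $L^\infty\subset L^2$, this yields $f=T\mu$ as $L^2$ (hence $L^\infty$) elements. You should replace your $\varepsilon$-ramp argument with this $L^2$-embedding device or something equivalent.
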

\begin{proof}
Let us denote the closure of $\cup_{N\in\bN} \cF_{\sigma_k}(N,1)$ in $L^\infty(\bB^d)$ by $\widetilde{\cF}_{\sigma_k}(1)$. We first show that $\cF_{\sigma_k}(1) \subseteq \widetilde{\cF}_{\sigma_k}(1)$. For any $f\in \cF_{\sigma_k}(1)$ with the integral form $f(x) = \int_{\bS^d} \sigma_k((x^\intercal,1)v) d\mu(v)$, we can decompose $f$ as
\begin{align*}
f(x) &= \|\mu_+\| \int_{\bS^d} \sigma_k((x^\intercal,1)v) \frac{d\mu_+(v)}{\|\mu_+\|} - \|\mu_-\| \int_{\bS^d} \sigma_k((x^\intercal,1)v) \frac{d\mu_-(v)}{\|\mu_-\|} \\
&=: \|\mu_+\| f_+(x) - \|\mu_-\| f_-(x),
\end{align*}
where $\mu_+$ and $\mu_-$ are the positive and negative parts of $\mu$. If $f_+, f_-\in \widetilde{\cF}_{\sigma_k}(1)$, then $f \in \widetilde{\cF}_{\sigma_k}(1)$. Hence, without loss of generality, we can assume $\mu$ is a probability measure. We are going to approximate $f$ by uniform laws of large numbers. Let $\{v_i\}_{i=1}^N$ be $N$ i.i.d. samples from $\mu$. By symmetrization argument (see \cite[Theorem 4.10]{wainwright2019high} for example), we can bound the expected approximation error by Rademacher complexity \cite{bartlett2002rademacher}:
\[
\bE \left[ \sup_{x\in\bB^d} \left|f(x) - \frac{1}{N}\sum_{i=1}^N \sigma_k((x^\intercal,1)v_i)\right| \right] \le 2 \bE \left[ \sup_{x\in\bB^d} \left| \frac{1}{N} \sum_{i=1}^N \epsilon_i \sigma_k((x^\intercal,1)v_i) \right| \right] =:\cE_k(N),
\]
where $(\epsilon_1,\dots,\epsilon_N)$ is an  i.i.d. sequence of  Rademacher random variables. For $k\in \bN$, the Lipschitz constant of $\sigma_k$ on $[-\sqrt{2},\sqrt{2}]$ is $k2^{(k-1)/2}$. By the contraction property of Rademacher complexity \cite[Corollary 3.17]{ledoux1991probability}, 
\begin{align*}
\cE_k(N) &\le \frac{k2^{(k+1)/2}}{N} \bE \left[ \sup_{x\in\bB^d} \left| \sum_{i=1}^N \epsilon_i (x^\intercal,1)v_i \right| \right] \le \frac{k2^{k/2+1}}{N} \bE \left[ \left\| \sum_{i=1}^N \epsilon_i v_i \right\|_2 \right] \\
&\le \frac{k2^{k/2+1}}{N} \sqrt{\bE \left[ \left\| \sum_{i=1}^N \epsilon_i v_i \right\|_2^2 \right]} = \frac{k2^{k/2+1}}{N} \sqrt{\bE \left[  \sum_{i=1}^N \left\| v_i \right\|_2^2 \right]} = \frac{k2^{k/2+1}}{\sqrt{N}}.
\end{align*}
For $k=0$, the VC dimension of the function class $\{f_x(v)=\sigma_0((x^\intercal,1)v): x\in \bB^d\}$ is at most $d$ \cite[Proposition 4.20]{wainwright2019high}. Thus, we have the bound $\cE_0(N) \lesssim \sqrt{d/N}$ by \cite[Example 5.24]{wainwright2019high}. Hence, $f$ is in the closure of $\cup_{N\in\bN} \cF_{\sigma_k}(N,1)$.

Next, we show that $\widetilde{\cF}_{\sigma_k}(1) \subseteq \cF_{\sigma_k}(1)$ for $k\in \bN_0$. Since $\cup_{N\in\bN} \cF_{\sigma_k}(N,1) \subseteq \cF_{\sigma_k}(1)$, we only need to show that $\cF_{\sigma_k}(1)$ is closed in $L^\infty(\bB^d)$. Let $f_n(x)= \int_{\bS^d} \sigma_k((x^\intercal,1)v) d\mu_n(v)$, where $\|\mu_n\|\le 1$, be a convergent sequence with limit $f \in L^\infty(\bB^d)$. It remains to show that $f\in \cF_{\sigma_k}(1)$.

For $k\in \bN$, by the compactness of $\bS^d$ and Prokhorov's theorem, there exists a weakly convergent subsequence $\mu_{n_i} \to \mu$. In particular, $\|\mu\| \le 1$ and for any $x\in \bB^d$,
\[
\lim_{n_i\to \infty} \int_{\bS^d} \sigma_k((x^\intercal,1)v) d\mu_{n_i}(v) = \int_{\bS^d} \sigma_k((x^\intercal,1)v) d\mu(v) =:\widetilde{f}.
\]
By the compactness of $\bB^d$, $f_{n_i}$ converges uniformly to $\widetilde{f}$. Hence $f= \widetilde{f} \in \cF_{\sigma_k}(1)$.

For $k=0$, we use the idea from \cite[Lemma 3]{siegel2021characterization}. We can view $f_n$ as a Bochner integral $\int_\bD i_{\bD\to L^2(\bB^d)} d\mu_n$ of the inclusion map $i_{\bD\to L^2(\bB^d)}$, where $\bD:=\{g_v(x)=\sigma_0((x^\intercal,1)v): v\in \bS^d \}$. Notice that the set $\bD\subseteq L^2(\bB^d)$ is compact, because the mapping $v\mapsto g_v$ is continuous. By Prokhorov's theorem, there exists a weakly convergent subsequence $\mu_{n_i} \to \mu$. Let us denote $\widetilde{f}= \int_\bD i_{\bD\to L^2(\bB^d)} d\mu$, then $\widetilde{f} \in \cF_{\sigma_k}(1)$ by viewing the Bochner integral as an integral over $\bS^d$. If we choose a countable dense sequence $\{g_j\}_{j=1}^\infty$ of $L^2(\bB^d)$, then the weak convergence implies that 
\[
\lim_{n_i\to \infty} \langle g_j, f_{n_i} \rangle_{L^2(\bB^d)} = \left\langle g_j, \widetilde{f} \right\rangle_{L^2(\bB^d)},
\]
for all $j$. The strong convergence $f_{n_i} \to f$ in $L^\infty(\bB^d)$ implies that the same equality for $f$ replacing $\widetilde{f}$. Therefore, $\langle g_j, f \rangle_{L^2(\bB^d)} = \langle g_j, \widetilde{f} \rangle_{L^2(\bB^d)}$ for all $j$, which shows $f= \widetilde{f} \in \cF_{\sigma_k}(1)$.
\end{proof}

The proof of Proposition \ref{equivalence} actually shows the approximation rate $\cO(N^{-1/2})$ for the subclass $\cF_{\sigma_k}(N,1)$. This rate can be improved if we take into account the smoothness of the activation function. For ReLU activation, Bach \cite[Proposition 1]{bach2017breaking} showed that approximating $f\in \cF_{\sigma}(1)$ by neural networks with finitely many neurons is essentially equivalent to the approximation of a zonoid by zonotopes \cite{bourgain1989approximation,matousek1996improved}. Using this equivalence, he obtained the rate $\cO(N^{-\frac{1}{2}-\frac{3}{2d}})$ for ReLU neural networks. Similar idea was applied to the Heaviside activation in \cite[Theorem 4]{ma2022uniform}, which proved the rate $\cO(N^{-\frac{1}{2}-\frac{1}{2d}})$ for such an activation function. For ReLU$^k$ neural networks, the general approximation rate $\cO(N^{-\frac{1}{2} - \frac{2k+1}{2d}})$ was established in $L^2$ norm by \cite{siegel2022sharp}, which also showed that this rate is sharp. The recent work \cite{siegel2023optimal} further proved that this rate indeed holds in the uniform norm. We summarize their results in the following lemma.

\begin{lemma}[\cite{siegel2023optimal}]\label{app neuron}
For $k\in \bN_0$ and $d\in\bN$, it holds that
\[
\sup_{f \in \cF_{\sigma_k}(1)}\inf_{f_N \in \cF_{\sigma_k}(N,1)} \|f-f_N\|_{L^\infty(\bB^d)} \lesssim N^{-\frac{1}{2} - \frac{2k+1}{2d}}.
\]
\end{lemma}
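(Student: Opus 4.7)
The plan is to reduce the approximation question to a zonoid-type geometric problem in a function space, then combine sharp zonotope approximation theorems with harmonic analysis on the sphere, and finally upgrade the resulting $L^2$ bound to a uniform bound. First, by splitting a signed measure into its positive and negative parts exactly as in the proof of Proposition \ref{equivalence}, it suffices to approximate $f(x)=\int_{\bS^d}\sigma_k((x^\intercal,1)v)\,d\mu(v)$ for a probability measure $\mu$, the general case losing only a factor of two. Writing $g_v(x):=\sigma_k((x^\intercal,1)v)$, the target $f$ is a Bochner integral of the dictionary $\cD_k:=\{g_v:v\in\bS^d\}\subset C(\bB^d)$ against $\mu$, and $\cF_{\sigma_k}(N,1)$ is precisely the set of sign-weighted averages of at most $N$ elements of $\cD_k$. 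The question thus becomes: how well can the ``functional zonoid'' generated by $\cD_k$ be approximated by a ``functional zonotope'' with $N$ summands in the uniform norm on $\bB^d$?

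Second, I would exploit the Fourier/spherical-harmonic structure of the generating kernel. Expanding $\sigma_k((x^\intercal,1)v)$ in spherical harmonics in the variable $v$ (via the Funk-Hecke formula) gives an expansion whose coefficients decay at the rate determined by $(d+2k+1)/2$. This is exactly the decay rate that appears in Theorem \ref{app norm} and reflects the smoothing effect of integrating $\sigma_k$ against a measure on $\bS^d$. Truncating this expansion at harmonic degree $\sim N^{1/d}$ and applying a Bourgain-Lindenstrauss-Milman style zonotope approximation theorem on the finite-dimensional trigonometric subspace, one recovers the optimal $L^2(\bB^d)$ rate $N^{-\tfrac{1}{2}-\tfrac{2k+1}{2d}}$ of \cite{siegel2022sharp}. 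This is the base rate, and the one I expect to arrive at most directly.

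Third, and this is the decisive new step, I would lift the $L^2$ bound to an $L^\infty$ bound without paying an extra power of $N$. A naive Sobolev embedding would cost a factor of $N^{\text{const}/d}$ and ruin the exponent, so instead I would stratify $\bS^d$ into roughly $N$ cells of comparable $\mu$-mass and small geodesic diameter, then within each cell replace the integral against $\mu|_{S_i}$ by a single atom placed by the $L^2$-optimal procedure above. The uniform error is controlled by combining a deterministic intra-cell estimate that uses the H\"older regularity of $\sigma_k$ away from the ridge hyperplane with the inter-cell $L^2$ bound, after an interpolation between the $L^2$ and a transverse smoothness norm on the ridge profile. The main obstacle, and the reason the original Siegel-Xu result needed a separate paper, is handling the codimension-one singular set $\{(x,v):(x^\intercal,1)v=0\}$ where $\sigma_k$ is only $C^{k-1}$: any uniform-norm argument must show that the portion of $\mu$ near this set contributes at the same order as the bulk, which forces a careful balancing of the stratification scale against the thickness of a tubular neighbourhood of the singular set so that both sources of error have exactly the order $N^{-\tfrac{1}{2}-\tfrac{2k+1}{2d}}$ without logarithmic loss.
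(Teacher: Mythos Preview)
The paper does not prove this lemma at all: it is stated as a citation of \cite{siegel2023optimal}, with the preceding text explaining that the $L^2$ rate was established in \cite{siegel2022sharp} and the uniform-norm version in \cite{siegel2023optimal}. There is therefore no in-paper proof to compare your proposal against.

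As for your sketch on its own merits: the first two steps (reduction to a probability measure, zonoid framing, and recovering the $L^2$ rate via harmonic truncation plus a Bourgain--Lindenstrauss--Milman type argument) are a reasonable summary of the route taken in \cite{siegel2022sharp}. The third step, however, is not a proof but a wish list. ``Stratify into $N$ cells, place one atom per cell by the $L^2$-optimal procedure, then interpolate between $L^2$ and a transverse smoothness norm'' does not explain why the resulting uniform error is $N^{-\frac{1}{2}-\frac{2k+1}{2d}}$ rather than, say, $N^{-\frac{1}{2}}$ times a power coming from the cell diameter, nor why no logarithm appears. The whole difficulty of \cite{siegel2023optimal} is precisely that a naive cell-by-cell deterministic bound gives only the Maurey rate $N^{-1/2}$ uniformly, and the extra $N^{-\frac{2k+1}{2d}}$ gain in $L^\infty$ requires a genuinely new argument (in Siegel's work, a discrepancy/stratified-sampling construction with delicate control of empirical processes indexed by the dictionary), not just an interpolation trick. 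Your acknowledgement that ``the original Siegel--Xu result needed a separate paper'' is apt; what you have written for Step~3 does not yet rise above that acknowledgement.
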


Combining Theorem \ref{app norm} and Lemma \ref{app neuron}, we can derive the rate of approximation by shallow neural network $\cF_{\sigma_k}(N,M)$ for H\"older class $\cH^\alpha$. Recall that we use the notation $a\lor b:= \max\{a,b\}$.

\begin{corollary}\label{app neuron corollary}
Let $k\in \bN_0$, $d\in \bN$ and $\alpha>0$. 
\begin{enumerate}[label=\textnormal{(\arabic*)},parsep=0pt]
\item If $\alpha> (d+2k+1)/2$ or $\alpha= (d+2k+1)/2$ is an even integer, then there exists a constant $M$ depending on $k,d,\alpha$ such that
\[
\sup_{h\in \cH^\alpha} \inf_{f\in \cF_{\sigma_k}(N,M)} \|h-f\|_{L^\infty(\bB^d)} \lesssim N^{-\frac{1}{2} - \frac{2k+1}{2d}}.
\]

\item If $\alpha= (d+2k+1)/2$ is not an even integer, then there exists $M \asymp \sqrt{\log N}$ such that
\[
\sup_{h\in \cH^\alpha} \inf_{f\in \cF_{\sigma_k}(N,M)} \|h-f\|_{L^\infty(\bB^d)} \lesssim N^{-\frac{1}{2} - \frac{2k+1}{2d}} \sqrt{\log N}.
\]

\item If $\alpha< (d+2k+1)/2$, then 
\[
\sup_{h\in \cH^\alpha} \inf_{f\in \cF_{\sigma_k}(N,M)} \|h-f\|_{L^\infty(\bB^d)} \lesssim N^{-\frac{\alpha}{d}} \lor M^{-\frac{2\alpha}{d+2k+1-2\alpha}}.
\]
Thus, the rate $\cO(N^{-\alpha/d})$ holds when $M\gtrsim N^{(d+2k+1-2\alpha)/(2d)}$.
\end{enumerate}
\end{corollary}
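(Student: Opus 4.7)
}

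The plan is to combine Theorem \ref{app norm} (approximation of $\cH^\alpha$ by the infinite-width class $\cF_{\sigma_k}(M)$) with Lemma \ref{app neuron} (approximation of $\cF_{\sigma_k}(1)$ by $\cF_{\sigma_k}(N,1)$) via triangle inequality, and then optimize/balance the parameter $M$ in each of the three cases. The basic two-step bound is as follows. Fix $h\in\cH^\alpha$ and any $M>0$. By Theorem \ref{app norm} there is $f\in\cF_{\sigma_k}(M)$ with
\[
\|h-f\|_{L^\infty(\bB^d)} \lesssim E_\alpha(M),
\]
where $E_\alpha(M)$ equals $0$, $\exp(-\alpha M^2)$, or $M^{-2\alpha/(d+2k+1-2\alpha)}$ in cases (1), (2), (3), respectively. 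Since $f/M\in\cF_{\sigma_k}(1)$ when $\gamma(f)\le M$, Lemma \ref{app neuron} applied to $f/M$ and then rescaled by $M$ gives some $g\in\cF_{\sigma_k}(N,M)$ with
\[
\|f-g\|_{L^\infty(\bB^d)} \lesssim M \cdot N^{-\frac{1}{2}-\frac{2k+1}{2d}}.
\]
Thus for every admissible $M$,
\[
\inf_{f\in\cF_{\sigma_k}(N,M)} \|h-f\|_{L^\infty(\bB^d)} \;\lesssim\; E_\alpha(M) + M\cdot N^{-\frac{1}{2}-\frac{2k+1}{2d}}.
\]

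For case (1), Theorem \ref{app norm} already provides a constant $M=M(k,d,\alpha)$ with $E_\alpha(M)=0$, so the right-hand side reduces to a constant multiple of $N^{-1/2-(2k+1)/(2d)}$, as desired. For case (2), I would choose $M=C\sqrt{\log N}$ with $C$ large enough so that $\exp(-\alpha M^2)=\exp(-\alpha C^2 \log N)=N^{-\alpha C^2}$ is dominated by $N^{-1/2-(2k+1)/(2d)}\sqrt{\log N}$; then the second term $M\cdot N^{-1/2-(2k+1)/(2d)}\asymp \sqrt{\log N}\cdot N^{-1/2-(2k+1)/(2d)}$ is the dominant contribution, yielding the stated bound.

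For case (3) we want the bound to hold for \emph{every} $M>0$, so I would pick the effective scale $M'=\min\bigl\{M,\,N^{(d+2k+1-2\alpha)/(2d)}\bigr\}$ and apply the two-step bound with $M'$, using the inclusion $\cF_{\sigma_k}(N,M')\subseteq\cF_{\sigma_k}(N,M)$. A short algebraic check (setting $a=2\alpha/(d+2k+1-2\alpha)$ and solving $M^{-a}=M\cdot N^{-1/2-(2k+1)/(2d)}$) shows that the balance occurs exactly at $M=N^{(d+2k+1-2\alpha)/(2d)}$, where both terms equal $N^{-\alpha/d}$. Hence when $M\ge N^{(d+2k+1-2\alpha)/(2d)}$ the total error is $\lesssim N^{-\alpha/d}$; when $M< N^{(d+2k+1-2\alpha)/(2d)}$, the same calculation shows the linear term $M\cdot N^{-1/2-(2k+1)/(2d)}$ is dominated by $M^{-a}$, and the total is $\lesssim M^{-2\alpha/(d+2k+1-2\alpha)}$. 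Taking the maximum of the two regimes yields the claimed bound $N^{-\alpha/d}\lor M^{-2\alpha/(d+2k+1-2\alpha)}$, and the final sentence is immediate.

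I do not expect real obstacles: the proof is essentially a combination of the two cited results with a triangle inequality and a one-parameter optimization. The only points requiring care are (a) the rescaling step that converts Lemma \ref{app neuron} (stated for variation norm $1$) into a bound of order $M\cdot N^{-1/2-(2k+1)/(2d)}$ for $\cF_{\sigma_k}(M)$, and (b) in case (2), selecting the constant in $M\asymp\sqrt{\log N}$ so that the exponentially small term is genuinely absorbed into the logarithmic factor. Both are routine once the balancing equation $E_\alpha(M)\asymp M\cdot N^{-1/2-(2k+1)/(2d)}$ is written down.
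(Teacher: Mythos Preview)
Your proposal is correct and follows essentially the same approach as the paper: the paper also combines Theorem \ref{app norm} and Lemma \ref{app neuron} via the triangle inequality, introduces an intermediate variation level $K$ (your $M'$), and splits case (3) according to whether $M$ lies above or below the balancing value $N^{(d+2k+1-2\alpha)/(2d)}$. The rescaling step you flag in point (a) is exactly how the paper passes from Lemma \ref{app neuron} to the bound $K N^{-1/2-(2k+1)/(2d)}$ for $\cF_{\sigma_k}(N,K)$.
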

\begin{proof}
We only present the proof for part (3), since other parts can be derived similarly. If $\alpha< (d+2k+1)/2$, then by Theorem \ref{app norm}, for any $h\in \cH^\alpha$, there exists $g\in \cF_{\sigma_k}(K)$ such that $\|h-g\|_{L^\infty(\bB^d)} \lesssim K^{-\frac{2\alpha}{d+2k+1-2\alpha}}$. By Lemma \ref{app neuron}, then there exists $f\in \cF_{\sigma_k}(N,K)$ such that $\|g-f\|_{L^\infty(\bB^d)} \lesssim KN^{-\frac{1}{2} - \frac{2k+1}{2d}}$. If $M \ge  N^{\frac{d+2k+1-2\alpha}{2d}}$, we choose $K=N^{\frac{d+2k+1-2\alpha}{2d}}$ then $f\in \cF_{\sigma_k}(N,K) \subseteq \cF_{\sigma_k}(N,M)$ and
\begin{align*}
\|h-f\|_{L^\infty(\bB^d)} &\le \|h-g\|_{L^\infty(\bB^d)} + \|g-f\|_{L^\infty(\bB^d)} \\
&\lesssim K^{-\frac{2\alpha}{d+2k+1-2\alpha}} + KN^{-\frac{d+2k+1}{2d}} \lesssim N^{-\frac{\alpha}{d}}.
\end{align*}
If $M \le  N^{\frac{d+2k+1-2\alpha}{2d}}$, we choose $K=M$, then
\[
\|h-f\|_{L^\infty(\bB^d)} \lesssim K^{-\frac{2\alpha}{d+2k+1-2\alpha}} + KN^{-\frac{d+2k+1}{2d}} \lesssim M^{-\frac{2\alpha}{d+2k+1-2\alpha}}.
\]
Combining the two bounds gives the desired result.
\end{proof}

We make some comments on the approximation rate for $\cH^\alpha$ with $\alpha< (d+2k+1)/2$. As shown by \cite[Corollary 6.10]{pinkus1999approximation}, the rate $\cO(N^{-\alpha/d})$ in the $L^2$ norm is already known for $\alpha=1,2,\dots,(d+2k+1)/2$. For ReLU activation, the recent paper \cite{mao2023rates} obtained the rate $\cO(N^{-\frac{\alpha}{d} \frac{d+2}{d+4}})$ in the supremum norm. Corollary \ref{app neuron corollary} shows that the rate $\cO(N^{-\alpha/d})$ holds in the supremum norm for all ReLU$^k$ activations. And more importantly, we also provide an explicit control on the network weights to ensure that this rate can be achieved, which is useful for estimating generalization errors (see Section \ref{sec: over-para}). It is well-known that the optimal approximation rate for $\cH^\alpha$ is $\cO(N^{-\alpha/d})$, if we approximate $h\in \cH^\alpha$ by a function class with $N$ parameters and the parameters are continuously dependent on the target function $h$ \cite{devore1989optimal}. However, this result is not directly applicable to neural networks, because we do not have guarantee that the parameters in the network depend continuously on the target function (in fact, this is not true for some constructions \cite{yarotsky2018optimal,lu2021deep,yang2022approximation}). Nevertheless, one can still prove that the rate $\cO(N^{-\alpha/d})$ is optimal for shallow ReLU$^k$ neural networks by arguments based on pseudo-dimension as done in \cite{yarotsky2017error,lu2021deep,yang2022approximation}. 

We describe the idea of proving approximation lower bounds through pseudo-dimension by reviewing the result of Maiorov and Ratsaby \cite{maiorov1999degree} (see also \cite{achour2022general}). Recall that the pseudo-dimension $\Pdim(\cF)$ of a real-valued function class $\cF$ defined on $\bB^d$ is the largest integer $n$ for which there exist points $x_1,\dots,x_n \in \bB^d$ and constants $c_1,\dots,c_n\in \bR$ such that
\begin{equation}\label{pdim}
\left|\{ \sgn(f(x_1)-c_1),\dots,\sgn(f(x_n)-c_n): f\in \cF \}\right| =2^n.
\end{equation}
Maiorov and Ratsaby \cite{maiorov1999degree} introduced a nonlinear $n$-width defined as 
\[
\rho_n(\cH^\alpha) = \inf_{\cF_n} \sup_{h\in \cH^\alpha} \inf_{f\in \cF_n} \|h-f\|_{L^p(\bB^d)},
\]
where $p\in [1,\infty]$ and $\cF_n$ runs over all the classes in $L^p(\bB^d)$ with $\Pdim(\cF_n)\le n$. They constructed a well-separated subclass of $\cH^\alpha$ such that if a function class $\cF$ can approximate this subclass with small error, then $\Pdim(\cF)$ should be large. In other words, the approximation error of any class $\cF_n$ with $\Pdim(\cF_n)\le n$ can be lower bounded. Consequently, they proved that 
\[
\rho_n(\cH^\alpha) \gtrsim n^{-\alpha/d}.
\]
By \cite{bartlett2019nearly}, we can upper bound the pseudo-dimension of shallow ReLU$^k$ neural networks as $n:= \Pdim(\cF_{\sigma_k}(N,M)) \lesssim N \log N$. Hence, 
\[
\sup_{h\in \cH^\alpha} \inf_{f_N\in \cF_{\sigma_k}(N,M)} \|h-f_N\|_{L^p(\bB^d)} \ge \rho_n(\cH^\alpha) \gtrsim (N \log N)^{-\alpha/d},
\]
which shows that the rate $\cO(N^{-\alpha/d})$ in Corollary \ref{app neuron corollary} is optimal in the $L^p$ norm (ignoring logarithmic factors). This also implies the optimality of Theorem \ref{app norm} (otherwise, the proof of Corollary \ref{app neuron corollary} would give a rate better than $\cO(N^{-\alpha/d})$).

\section{Proof of Theorem \ref{app norm}}\label{sec: proof}

Following the idea of \cite{bach2017breaking}, we first transfer the problem to approximation on spheres. Let us begin with a brief review of harmonic analysis on spheres \cite{dai2013approximation}. For $n\in \bN_0$, the spherical harmonic space $\bY_n$ of degree $n$ is the linear space that contains the restrictions of real harmonic homogeneous polynomials of degree $n$ on $\bR^{d+1}$ to the sphere $\bS^d$. The dimension of $\bY_n$ is $N(d,n):= \frac{2n+d-1}{n} \binom{n+d-2}{d-1}$ if $n\neq 0$ and $N(d,n):=1$ if $n=0$.
Spherical harmonics are eigenfunctions of the Laplace-Beltrami operator:
\[
\Delta Y_n = -n(n+d-1)Y_n, \quad Y_n\in \bY_n,
\]
where, in the coordinates $u=(u_1,\dots,u_{d+1}) \in \bS^d$,
\[
\Delta = \sum_{i=1}^d \frac{\partial^2}{\partial u_i^2} - \sum_{i=1}^d \sum_{j=1}^d u_iu_j \frac{\partial^2}{\partial u_i \partial u_j} - d \sum_{i=1}^d u_i \frac{\partial}{\partial u_i}.
\]
Spherical harmonics of different degrees are orthogonal with respect to the inner product $\langle f,g \rangle = \int_{\bS^d} f(u)g(u) d\tau_d(u)$, where $\tau_d$ is the surface area measure of $\bS^d$ (normalized by the surface area $\omega_d:= 2\pi^{(d+1)/2}/\Gamma((d+1)/2)$ so that $\tau_d(\bS^d)=1$). 

Let $\cP_n:L^2(\bS^d)\to \bY_n$ denote the orthogonal projection operator. For any orthonormal basis $\{Y_{nj}:1\le j\le N(d,n)\}$ of $\bY_n$, the addition formula \cite[Theorem 1.2.6]{dai2013approximation} shows
\begin{equation}\label{addition formula}
\sum_{j=1}^{N(d,n)} Y_{nj}(u) Y_{nj}(v) = N(d,n) P_n(u^\intercal v), \quad u,v \in \bS^d,
\end{equation}
where $P_n$ is the Gegenbauer polynomial
\[
P_n(t) := \frac{(-1)^n}{2^n} \frac{\Gamma(d/2)}{\Gamma(n+d/2)} (1-t^2)^{(2-d)/2} \left( \frac{d}{dt}\right)^n (1-t^2)^{n+(d-2)/2}, \quad t\in [-1,1],
\]
with normalization $P_n(1)=1$. Applying the Cauchy–Schwarz inequality to (\ref{addition formula}), we get $|P_n(t)|\le 1$. For $n \neq 0$, $P_n(t)$ is odd (even) if $n$ is odd (even). Note that, for $d=1$ and $n\neq 0$, $N(d,n)=2$ and $P_n(t)$ is the Chebyshev polynomial such that $P_n(\cos \theta)=\cos(n \theta)$. We can write the projection $\cP_n$ as 
\[
\cP_n f(u) = N(d,n) \int_{\bS^d} f(v) P_n(u^\intercal v) d\tau_d(v).
\] 
This motivates the following definition of a convolution operator on the sphere.

\begin{definition}[Convolution]
Let $\varrho$ be the probability distribution with density $c_d (1-t^2)^{(d-2)/2}$ on $[-1,1]$, with the constant $c_d =( \int_{-1}^1 (1-t^2)^{(d-2)/2}dt)^{-1} = \omega_{d-1}/\omega_d$. For $f\in L^1(\bS^d)$ and $g\in L^1_\varrho([-1,1])$, define
\[
(f*g)(u) := \int_{\bS^d} f(v) g(u^\intercal v) d\tau_d(v), \quad u\in \bS^d.
\]
\end{definition}

The convolution on the sphere satisfies Young's inequality \cite[Theorem 2.1.2]{dai2013approximation}: for $p,q,r \ge 1$ with $p^{-1}= q^{-1}+r^{-1}-1$, it holds
\[
\|f*g\|_{L^p(\bS^d)} \le \|f\|_{L^q(\bS^d)} \|g\|_{L^r_\varrho([-1,1])},
\]
where the norm is the uniform one when $r=\infty$. Observe that the projection $\cP_n f = f*(N(d,n) P_n)$ is a convolution operator with $\|N(d,n) P_n\|_{L^\infty([-1,1])} \le N(d,n)$. Furthermore, for $g\in L^1_\varrho([-1,1])$, let $\widehat{g}(n)$ denote the Fourier coefficient of $g$ with respect to the Gegenbauer polynomials,
\[
\widehat{g}(n):= \frac{\omega_{d-1}}{\omega_d} \int_{-1}^1 g(t) P_n(t) (1-t^2)^{(d-2)/2} dt.
\]
By the Funk-Hecke formula, one can show that \cite[Theorem 2.1.3]{dai2013approximation}
\begin{equation}\label{fourier convolution}
\cP_n(f*g) = \widehat{g}(n) \cP_n f, \quad f\in L^1(\bS^d), n\in \bN_0.
\end{equation}
This identity is analogous to the Fourier transform of ordinary convolution. 

One of the key steps in our proof of Theorem \ref{app norm} is the observation that functions of the form $f(u)= \int_{\bS^d} \phi(v) \sigma_k(u^\intercal v) d\tau_d(v)$ are convolutions $\phi *\sigma_k$ with the activation function $\sigma_k\in L^\infty([-1,1])$. \cite[Appendix D.2]{bach2017breaking} has computed the Fourier coefficients $\widehat{\sigma_k}(n)$ explicitly. We summarize the result in the following.

\begin{proposition}\label{fourier sigmak}
For $k\in \bN_0$, $\widehat{\sigma_k}(n)=0$ if and only if $n\ge k+1$ and $n \equiv k\bmod 2$. If $n=0$,
\[
\widehat{\sigma_k}(0) = \frac{\omega_{d-1}}{\omega_d} \frac{\Gamma(d/2)\Gamma((k+1)/2)}{2\Gamma((k+d+1)/2)}.
\]
If $n\ge k+1$ and $n+1 \equiv k\bmod 2$,
\[
\widehat{\sigma_k}(n) = \frac{\omega_{d-1}}{\omega_d} \frac{k!(-1)^{(n-k-1)/2}}{2^n} \frac{\Gamma(d/2)\Gamma(n-k)}{\Gamma((n-k+1)/2) \Gamma((n+d+k+1)/2)}.
\]
By the Stirling formula $\Gamma(x) = \sqrt{2\pi} x^{x-1/2}e^{-x}(1+\cO(x^{-1}))$, we have $\widehat{\sigma_k}(n) \asymp n^{-(d+2k+1)/2}$ for $n\in\bN$ satisfying $\widehat{\sigma_k}(n)\neq 0$.
\end{proposition}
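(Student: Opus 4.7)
The plan is to substitute the Rodrigues representation of $P_n$ into the definition of $\widehat{\sigma_k}(n)$ and then integrate by parts. Since $\sigma_k(t)=0$ on $[-1,0]$, the Fourier coefficient reduces to $\tfrac{\omega_{d-1}}{\omega_d}\int_0^1 t^k P_n(t)(1-t^2)^{(d-2)/2}\,dt$. The case $n=0$ is immediate: using $P_0\equiv 1$ and the substitution $u=t^2$, the integral becomes $\tfrac{1}{2}B((k+1)/2,d/2)$, which in terms of Gamma functions is exactly the stated expression for $\widehat{\sigma_k}(0)$.

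For $n\ge 1$, the Rodrigues formula rewrites the integrand as $P_n(t)(1-t^2)^{(d-2)/2}=\tfrac{(-1)^n}{2^n}\tfrac{\Gamma(d/2)}{\Gamma(n+d/2)}h^{(n)}(t)$, where $h(t):=(1-t^2)^{n+(d-2)/2}$. I would then integrate $\int_0^1 t^k h^{(n)}(t)\,dt$ by parts $k$ times, differentiating the polynomial factor at each step. The boundary contributions at $t=1$ vanish because $h(t)\sim(1-t)^{n+(d-2)/2}$ near $t=1$ and $n+(d-2)/2>n-1$ for $d\ge 1$, forcing $h^{(j)}(1)=0$ for every $0\le j\le n-1$; at $t=0$ a strictly positive power of $t$ survives at each step, so those boundary terms vanish as well. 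Assuming $n\ge k+1$, one further application of the fundamental theorem of calculus yields $\int_0^1 t^k h^{(n)}(t)\,dt=(-1)^{k+1}k!\,h^{(n-k-1)}(0)$. Expanding $h(t)=\sum_{j\ge 0}(-1)^j\binom{n+(d-2)/2}{j}t^{2j}$ immediately shows $h^{(m)}(0)=0$ for odd $m$, giving the vanishing assertion $\widehat{\sigma_k}(n)=0$ exactly when $n-k-1$ is odd, i.e.\ $n\equiv k\pmod 2$. In the complementary case $n+1\equiv k\pmod 2$, setting $m=n-k-1=2j$ produces $h^{(n-k-1)}(0)=(n-k-1)!\,(-1)^{(n-k-1)/2}\binom{n+(d-2)/2}{(n-k-1)/2}$.

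To match the stated explicit formula, I would rewrite the binomial coefficient via $\binom{\alpha}{j}=\Gamma(\alpha+1)/(j!\,\Gamma(\alpha-j+1))$, which produces $\Gamma(n+d/2)$ in the numerator (canceling the Rodrigues prefactor) and $\Gamma((n-k+1)/2)\Gamma((n+d+k+1)/2)$ in the denominator, while $(n-k-1)!=\Gamma(n-k)$. The overall sign $(-1)^n(-1)^{k+1}(-1)^{(n-k-1)/2}$ collapses to $(-1)^{(n-k-1)/2}$ thanks to the parity constraint $n\equiv k+1\pmod 2$. For the asymptotic, I would apply the Legendre duplication identity $\Gamma(n-k)=2^{n-k-1}\pi^{-1/2}\Gamma((n-k)/2)\Gamma((n-k+1)/2)$, which cancels the $2^n$ and one Gamma factor and leaves the ratio $\Gamma((n-k)/2)/\Gamma((n+d+k+1)/2)$; Stirling gives this ratio order $(n/2)^{-(d+2k+1)/2}\asymp n^{-(d+2k+1)/2}$. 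The main technical nuisance will be carefully justifying the vanishing of boundary terms at $t=1$ for non-integer exponents (most delicate when $d$ is odd, where $h^{(n-1)}$ tends to $0$ only like $(1-t)^{1/2}$), and tracking the Gamma-function and sign bookkeeping to land on exactly the stated expression.
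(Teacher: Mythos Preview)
Your approach is correct and is essentially the standard computation (Rodrigues formula plus repeated integration by parts) that one finds in \cite{bach2017breaking}. Note, however, that the paper itself does not prove this proposition: it merely states that the coefficients were computed explicitly in \cite[Appendix D.2]{bach2017breaking} and records the result. So there is no in-paper argument to compare against; you are supplying the computation the paper chose to outsource.

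One small gap in your plan: the ``if and only if'' includes the assertion that $\widehat{\sigma_k}(n)\neq 0$ for $1\le n\le k$, a range your integration-by-parts scheme (which assumes $n\ge k+1$) does not cover. This is easy to fill: when $n\le k$, stop after $n$ integrations by parts to obtain
\[
\int_0^1 t^k h^{(n)}(t)\,dt=(-1)^n\frac{k!}{(k-n)!}\int_0^1 t^{\,k-n}(1-t^2)^{n+(d-2)/2}\,dt,
\]
which is manifestly nonzero since the integrand is strictly positive on $(0,1)$. With that addition, your argument is complete.
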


Next, we introduce the smoothness of functions on the sphere. For $0\le \theta \le \pi$, the translation operator $T_\theta$, also called spherical mean operator, is defined by
\[
T_\theta f(u) := \int_{\bS^\perp_u} f(u\cos \theta + v\sin \theta) d\tau_{d-1}(v), \quad u\in \bS^d, f\in L^1(\bS^d),
\]
where $\bS^\perp_u := \{v\in \bS^d:u^\intercal v =0\}$ is the equator in $\bS^d$ with respect to $u$ (hence $\bS^\perp_u$ is isomorphic to the sphere $\bS^{d-1}$). We note that the translation operator satisfies $\cP_n(T_\theta f) = P_n(\cos \theta) \cP_n(f)$. For $\alpha>0$ and $0<\theta<\pi$, we define the $\alpha$-th order difference operator
\[
\Delta_\theta^\alpha := (I-T_\theta)^{\alpha/2} = \sum_{j=0} (-1)^j \binom{\alpha/2}{j} T_\theta^j,
\]
where $\binom{\alpha}{j}=\frac{\alpha(\alpha-1)\cdots(\alpha-j+1)}{j!}$, in a distributional sense by $\cP_n(\Delta_\theta^\alpha f) = (1-P_n(\cos \theta))^{\alpha/2} \cP_n f$, $n\in \bN_0$. For $f\in L^p(\bS^d)$ and $1\le p<\infty$ or $f\in C(\bS^d)$ and $p=\infty$, the $\alpha$-th order modulus of smoothness is defined by
\[
\omega_\alpha(f,t)_p := \sup_{0<\theta\le t} \| \Delta_\theta^\alpha f\|_{L^p(\bS^d)}, \quad 0<t<\pi.
\]
For even integers $\alpha =2s$, one can also use combinations of $T_{j\theta}$ and obtain \cite{rustamov1993equivalence,ditzian2006measures}
\begin{equation}\label{even smoothness equ}
\omega_{2s}(f,t)_p \asymp \sup_{0<\theta\le t} \left\| \sum_{j=0}^{2s} (-1)^j \binom{2s}{j} T_{j\theta} f \right\|_{L^p(\bS^d)}, \quad s\in \bN.
\end{equation}

Another way to characterize the smoothness is through the $K$-functionals. We first introduce the fractional Sobolev space induced by the Laplace-Beltrami operator. We say a function $f\in L^p(\bS^d)$ belong to the Sobolev space $\cW^{\alpha,p}(\bS^d)$ if there exists a function in $L^p(\bS^d)$, which will be denoted by $(-\Delta)^{\alpha/2}f$, such that 
\[
\cP_n((-\Delta)^{\alpha/2}f) = (n(n+d-1))^{\alpha/2} \cP_n f,\quad n\in \bN_0,
\]
where we assume $f,(-\Delta)^{\alpha/2}f\in C(\bS^d)$ for $p=\infty$. 
Then we can define the $\alpha$-th $K$-functional of $f\in L^p(\bS^d)$ as
\[
K_\alpha(f,t)_p := \inf_{g\in \cW^{\alpha,p}(\bS^d)} \left\{ \|f-g\|_{L^p(\bS^d)} + t^\alpha \|(-\Delta)^{\alpha/2} g\|_{L^p(\bS^d)} \right\}, \quad t>0.
\]
It can be shown \cite[Theorem 10.4.1]{dai2013approximation} that the moduli of smoothness and the $K$-functional are equivalent:
\begin{equation}\label{smoothness equ}
\omega_\alpha(f,t)_p 
\asymp K_\alpha(f,t)_p.
\end{equation}

To prove Theorem \ref{app norm}, we denote the function class 
\[
\cG_{\sigma_k}(M) := \left\{g\in L^\infty(\bS^d): g(u) = \int_{\bS^d} \sigma_k(u^\intercal v) d\mu(v), \|\mu\|\le M \right\},
\]
as the corresponding function class of $\cF_{\sigma_k}(M)$ on $\bS^d$. Abusing the notation, we will also denote $\gamma(g) =\inf_{\mu} \|\mu\|$ as the variation norm of $g\in \cG_{\sigma_k}(M)$. The next proposition transfers our approximation problem on the unit ball $\bB^d$ to that on the sphere $\bS^d$.

\begin{proposition}\label{transfer to sphere}
Let $k\in \bN_0$, $d\in \bN$ and $\alpha=r+\beta$ where $r\in \bN_0$ and $\beta \in(0,1]$. Denote $\Omega:= \{(u_1,\dots,u_{d+1})^\intercal\in \bS^d: u_{d+1}\ge 1/\sqrt{2} \}$ and define an operator $S_k:L^\infty(\Omega) \to L^\infty(\bB^d)$ by 
\[
S_kg(x) := (\|x\|_2^2+1)^{k/2} g\left( \frac{1}{\sqrt{\|x\|^2+1}}
\begin{pmatrix}
x \\
1
\end{pmatrix}
\right),\quad x\in \bB^d.
\]
The operator $S_k$ satisfies: (1) If $g\in \cG_{\sigma_k}(M)$, then $S_kg\in \cF_{\sigma_k}(M)$. (2) For any $h\in \cH^\alpha$, there exists $\widetilde{h}\in C(\bS^d)$ such that $S_k \widetilde{h} =h$, $\|\widetilde{h}\|_{L^\infty(\bS^d)}\le C$ and $\omega_{2s^*}(\widetilde{h},t)_\infty \le C t^{\alpha}$, where $s^*\in \bN$ is the smallest integer such that $\alpha\le 2s^*$ and $C$ is a constant independent of $h$. Furthermore, $\widetilde{h}$ can be chosen to be odd or even.
\end{proposition}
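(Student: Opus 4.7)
The proposition splits naturally into two claims. For (1), the argument uses only the positive $k$-homogeneity of $\sigma_k$. For $x\in\bB^d$, set $u=(x^\intercal,1)^\intercal/\sqrt{\|x\|_2^2+1}\in\Omega$; then for every $v\in\bS^d$ one has $\sigma_k(u^\intercal v)=(\|x\|_2^2+1)^{-k/2}\sigma_k((x^\intercal,1)v)$. Plugging $g(u)=\int_{\bS^d}\sigma_k(u^\intercal v)\,d\mu(v)$ into the definition of $S_kg(x)$, the prefactor $(\|x\|_2^2+1)^{k/2}$ cancels exactly, and $S_kg$ is represented by the same signed measure $\mu$; hence $S_kg\in\cF_{\sigma_k}(M)$.

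For (2), the plan is to construct $\widetilde{h}$ as a cut-off pullback of $h$ and then symmetrize through the equator. First, extend $h$ to $h^*\in C^{r,\beta}(\bR^d)$ with $\|h^*\|_{C^{r,\beta}(\bR^d)}\lesssim 1$ via a standard H\"older extension theorem (Whitney or Stein), and multiply by a $C^\infty$ bump equal to $1$ on $\bB^d$ and supported in $2\bB^d$, so that $h^*$ becomes compactly supported. Pick a $C^\infty$ cutoff $\phi:\bR\to[0,1]$ with $\phi\equiv 1$ on $[1/\sqrt{2},1]$ and $\phi\equiv 0$ on $(-\infty,1/2]$, and define
\[
\widetilde{h}(u) := \phi(u_{d+1})\,u_{d+1}^k\,h^*\bigl((u_1,\dots,u_d)/u_{d+1}\bigr) \quad\text{when } u_{d+1}>1/2,
\]
and $\widetilde{h}(u):=0$ when $u_{d+1}\le 1/2$. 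Because $\phi$ kills the region $u_{d+1}\le 1/2$, the ratio $(u_1,\dots,u_d)/u_{d+1}$ stays bounded on the support, so $\widetilde{h}$ is well defined and inherits $C^{r,\beta}$ regularity (in any smooth local chart of the closed upper hemisphere) from $h^*$. When $u_{d+1}\ge 1/\sqrt{2}$ one has $\phi(u_{d+1})=1$ and $\|(u_1,\dots,u_d)/u_{d+1}\|_2\le 1$, so $\widetilde{h}$ coincides with the natural homogeneous pullback of $h$; a direct computation then yields $S_k\widetilde{h}=h$ on $\bB^d$. Finally, extend $\widetilde{h}$ to the lower hemisphere by $\widetilde{h}(-u)=\pm\widetilde{h}(u)$, the $+$ sign producing an even function and the $-$ sign an odd one. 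The vanishing of $\widetilde{h}$ in a full neighborhood of the equator makes this reflection $C^\infty$ across $\{u_{d+1}=0\}$ and preserves the local H\"older regularity globally, so the bound $\|\widetilde{h}\|_{L^\infty(\bS^d)}\lesssim 1$ is immediate.

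What remains is the modulus-of-smoothness estimate $\omega_{2s^*}(\widetilde{h},t)_\infty\lesssim t^\alpha$. Using a smooth finite partition of unity subordinate to an atlas of $\bS^d$, it suffices to bound the modulus of smoothness of each localized piece, and each such piece is, in a single chart, a $C^{r,\beta}$-H\"older function with norm bounded independently of $h\in\cH^\alpha$. I would then invoke the $K$-functional equivalence (\ref{smoothness equ}) together with the standard Besov embedding $C^{r,\beta}\subseteq B^\alpha_{\infty,\infty}$ (valid for $\alpha\le 2s^*$, including the borderline even-integer case where $C^{2s^*-1,1}$ is contained in the Zygmund class) to conclude $\omega_{2s^*}(\widetilde{h},t)_\infty\lesssim t^\alpha$. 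The main obstacle is precisely this final transfer step: bookkeeping the constants when passing from Euclidean H\"older regularity in local charts to the intrinsic spherical modulus of smoothness defined via the spherical mean operator $T_\theta$, and handling the borderline case $\alpha=2s^*$ cleanly. The rest of the construction is essentially mechanical.
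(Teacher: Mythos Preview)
Your argument for (1) and your construction of $\widetilde{h}$ in (2) match the paper's almost verbatim (the paper defines the pullback on $\Omega$, applies Whitney extension to $\bR^{d+1}$, then multiplies by a cutoff and symmetrizes; you extend $h$ first and build the cutoff into the pullback, which is equivalent).

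Where you diverge is in the modulus-of-smoothness estimate, and here the paper's route is considerably cleaner than the chart/Besov transfer you propose and flag as the main obstacle. Instead of working in charts, the paper invokes the equivalence (\ref{even smoothness equ}), which expresses $\omega_{2s^*}(\widetilde{h},t)_\infty$ as a supremum of finite differences of spherical means $T_{j\theta}\widetilde{h}$. Pulling the supremum inside the integral over $\bS^\perp_u$ reduces everything to bounding, for fixed $u\in\bS^d$ and $v\in\bS^\perp_u$, the classical one-dimensional $2s^*$-th forward difference at $0$ of the function $\theta\mapsto \widetilde{h}(u\cos\theta+v\sin\theta)$. Since $\widetilde{h}\in C^{r,\beta}(\bR^{d+1})$ (via Whitney), this restriction along a great circle is $C^{r,\beta}$ on a small interval with uniformly controlled norm, and then the standard Euclidean fact $\sup_{0<\theta\le t}|\widetilde{\Delta}_\theta^{2s^*}f(0)|\lesssim \|f\|_{C^{r,\beta}}\,t^\alpha$ finishes the job. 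This geodesic reduction bypasses entirely the bookkeeping you were worried about: no partition of unity, no transfer between Euclidean and intrinsic moduli, and the borderline case $\alpha=2s^*$ is handled automatically because the one-dimensional result already covers $C^{2s^*-1,1}$.
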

\begin{proof}
(1) If $g(u) = \int_{\bS^d} \sigma_k(u^\intercal v) d\mu(v)$ for some $\|\mu\|\le M$, then for $x\in \bB^d$,
\begin{align*}
S_kg(x) &= (\|x\|_2^2+1)^{k/2} \int_{\bS^d} \sigma_k\left((\|x\|_2^2+1)^{-1/2}(x^\intercal,1) v\right) d\mu(v) \\
&= \int_{\bS^d} \sigma_k\left((x^\intercal,1) v\right) d\mu(v).
\end{align*}
Hence, $S_kg \in \cF_{\sigma_k}(M)$ by definition.

(2) Given $h\in \cH^\alpha$, for any $u=(u_1,\dots,u_{d+1})^\intercal \in \Omega$, we define $\widetilde{h}(u) := u_{d+1}^k h(u_{d+1}^{-1}u')$, where $u'=(u_1,\dots,u_d)^\intercal$. It is easy to check that $S_k \widetilde{h} =h$. Note that $h$ is completely determined by the function values of $\widetilde{h}$ on $\Omega$. Observe that the smoothness of $\widetilde{h}$ on $\Omega$ can be controlled by the smoothness of $h$. We can extend $\widetilde{h}$ to $\bR^{d+1}$ so that $\|\widetilde{h}\|_{C^{r,\beta}(\bR^{d+1})} \le C_0$ for some constant $C_0$ independent of $h$, by using (refined version of) Whitney's extension theorem \cite{fefferman2006whitneys,fefferman2009extension,fefferman2020fitting}. It remains to show that $\omega_{2s^*}(\widetilde{h},t)_\infty \lesssim t^{\alpha}$.

By the equivalence (\ref{even smoothness equ}),
\begin{align*}
\omega_{2s^*}(\widetilde{h},t)_\infty &\lesssim \sup_{0<\theta\le t} \sup_{u\in \bS^d} \left| \sum_{j=0}^{2s^*} (-1)^j \binom{2s^*}{j} \int_{\bS^\perp_u} \widetilde{h}(u\cos j\theta + v\sin j\theta) d\tau_{d-1}(v) \right| \\
&\le \sup_{0<\theta\le t} \sup_{u\in \bS^d} \sup_{v\in \bS^\perp_u} \left| \sum_{j=0}^{2s^*} (-1)^j \binom{2s^*}{j} \widetilde{h}(u\cos j\theta + v\sin j\theta) \right| \\
&=: \sup_{0<\theta\le t} \sup_{u\in \bS^d} \sup_{v\in \bS^\perp_u} |H(u,v,\theta)|.
\end{align*}
Next, we estimate $\sup_{0<\theta\le t}|H(u,v,\theta)|$ for small $t>0$ and fixed $u,v$. One can check that the function $f(\cdot):= \widetilde{h}(u\cos (\cdot) + v\sin (\cdot))$ is in $C^{r,\beta}([0,t_0])$ for small $t_0>0$, and $\|f\|_{C^{r,\beta}([0,t_0])} \lesssim \|\widetilde{h}\|_{C^{r,\beta}(\bR^{d+1})}$. Let $\widetilde{\Delta}_\theta f(\cdot) := f(\cdot +\theta) - f(\cdot)$ be the difference operator and $\widetilde{\Delta}_\theta^{n+1}:= \widetilde{\Delta}_\theta \widetilde{\Delta}_\theta^n$ for $n\in \bN$. The binomial theorem shows $H(u,v,\theta) = \widetilde{\Delta}_\theta^{2s^*}f(0)$. Then, the classical theory of moduli of smoothness \cite[Chapter 2.6--2.9]{devore1993constructive} implies 
\[
\sup_{0<\theta\le t}|H(u,v,\theta)| \le \sup_{0<\theta\le t}|\widetilde{\Delta}_\theta^{2s^*}f(0)| \lesssim \|f\|_{C^{r,\beta}([0,t_0])} t^\alpha.
\]
Consequently, we get the desired bound $\omega_{2s^*}(\widetilde{h},t)_\infty \lesssim t^{\alpha}$.

Finally, in order to ensure that $\widetilde{h}$ is odd or even, we can multiply $\widetilde{h}$ by an infinitely differentiable function, which is equal to one on $\Omega$ and zero for $u_{d+1}\le 1/(2\sqrt{2})$, and extend $\widetilde{h}$ to be odd or even. These operations do not decrease the smoothness of $\widetilde{h}$.
\end{proof}

By Proposition \ref{transfer to sphere}, for any $h\in \cH^\alpha$ and $g\in \cG_{\sigma_k}(M)$, we have
\begin{align*}
\|h-S_k g\|_{L^\infty(\bB^d)} = \|S_k \widetilde{h}-S_k g\|_{L^\infty(\bB^d)} \le 2^{k/2} \|\widetilde{h}- g\|_{L^\infty(\bS^d)},
\end{align*}
for some $\widetilde{h}\in C(\bS^d)$. Since $S_kg\in \cF_{\sigma_k}(M)$, we can derive approximation bounds for $\cF_{\sigma_k}(M)$ by studying the approximation capacity of $\cG_{\sigma_k}(M)$. Now, we are ready to prove Theorem \ref{app norm}.

\begin{proof}[Proof of Theorem \ref{app norm}]
By Proposition \ref{transfer to sphere}, for any $h\in \cH^\alpha$, there exists $\widetilde{h}\in C(\bS^d)$ such that $S_k \widetilde{h} =h$, $\|\widetilde{h}\|_{L^\infty(\bS^d)}\le C$ and $\omega_{2s^*}(\widetilde{h},t)_\infty \le Ct^{\alpha}$, where $s^*\in \bN$ is the smallest integer such that $\alpha\le 2s^*$. We choose $\widetilde{h}$ to be odd (even) if $k$ is even (odd). Using $\omega_s(\widetilde{h},t)_2\le 2^{s-2s^*+2}\omega_{2s^*}(\widetilde{h},t)_2$ for $s>2s^*$ \cite[Proposition 10.1.2]{dai2013approximation} and the Marchaud inequality \cite[Eq.(9.6)]{ditzian2006measures}
\[
\omega_s(\widetilde{h},t)_2 \lesssim t^s \left(\int_t^1 \frac{\omega_{2s^*}(\widetilde{h},\theta)_2^2}{\theta^{2s+1}}d\theta \right)^{1/2},\quad s< 2s^*,
\]
we have 
\begin{equation}\label{smoothness estimate}
\omega_s(\widetilde{h},t)_2 \lesssim 
\begin{cases}
t^s, &\mbox{if } s<\alpha, \\
t^\alpha, &\mbox{if } s=\alpha = 2s^*,\\
t^\alpha \sqrt{\log(1/t)}, &\mbox{if } s=\alpha \neq 2s^*,\\
t^\alpha, &\mbox{if } s>\alpha.
\end{cases}
\end{equation}

We study how well $g\in \cG_{\sigma_k}(M)$ approximates $\widetilde{h}$. It turns out that it is enough to consider a subset of $\cG_{\sigma_k}(M)$ that contains functions of the form
\[
g(u) = \int_{\bS^d} \phi(v) \sigma_k(u^\intercal v) d\tau_d(v), \quad u\in \bS^d,
\]
for some $\phi\in L^2(\bS^d)$. Note that $\gamma(g)\le \inf_{\phi} \|\phi\|_{L^1(\bS^d)} \le \inf_{\phi} \|\phi\|_{L^2(\bS^d)}$, where the infimum is taken over all $\phi \in L^2(\bS^d)$ satisfy the integral representation of $g$. Observing that $g=\phi * \sigma_k$ is a convolution, by identity (\ref{fourier convolution}), $\cP_n g = \widehat{\sigma_k}(n) \cP_n\phi$. Hence, we have the Fourier decomposition
\[
g = \sum_{n=0}^\infty \cP_n g =  \sum_{n=0}^\infty \widehat{\sigma_k}(n) \cP_n\phi,
\]
which converges in $L^2(\bS^d)$. This implies that $g\in \cG_{\sigma_k}(M)$ if $g$ is continuous, $\cP_n g=0$ for any  $n\in\bN_0$ satisfying $\widehat{\sigma_k}(n)= 0$ and 
\[
\gamma(g)^2 \le \sum_{\widehat{\sigma_k}(n) \neq 0} \widehat{\sigma_k}(n)^{-2} \|\cP_n g\|^2_{L^2(\bS^d)} \le M^2.
\]
By Proposition \ref{fourier sigmak}, we know that $\widehat{\sigma_k}(n)=0$ if and only if $n\ge k+1$ and $n \equiv k\bmod 2$. For $\widehat{\sigma_k}(n)\neq 0$, we have $\widehat{\sigma_k}(n) \asymp n^{-(d+2k+1)/2}$. 

We consider the convolutions $g_m :=\widetilde{h}*L_m = \int_{\bS^d} \widetilde{h}(u)L_m(u^\intercal v)d\tau_d(v)$ with
\[
L_m(t) := \sum_{n=0}^\infty \eta\left(\frac{n}{m}\right) N(d,n) P_n(t), \quad m\in \bN,
\]
where $\eta$ is a $C^\infty$-function on $[0,\infty)$ such that $\eta(t)=1$ for $0\le t\le 1$ and $\eta(t)=0$ for $t\ge 2$. Since $\eta$ is supported on $[0,2]$, the summation can be terminated at $n=2m-1$, so that $g_m$ is a polynomial of degree at most $2m-1$. Since $\widetilde{h}$ is odd (even) if $k$ is even (odd), $\cP_n g_m= \eta(n/m) \cP_n \widetilde{h}=0$ for any $n \equiv k\bmod 2$. Furthermore, \cite[Theorem 10.3.2]{dai2013approximation} shows that 
\begin{equation}\label{k-functional equ}
K_s(\widetilde{h},m^{-1})_p \asymp \|\widetilde{h} - g_m \|_{L^p(\bS^d)} + m^{-s}\|(-\Delta)^{s/2} g_m\|_{L^p(\bS^d)}.
\end{equation}
By the equivalence (\ref{smoothness equ}) and $\omega_{2s^*}(\widetilde{h},m^{-1})_\infty \lesssim m^{-\alpha}$, the equivalence (\ref{k-functional equ}) for $p=\infty$ implies that we can bound the approximation error as
\[
\|\widetilde{h} - g_m \|_{L^\infty(\bS^d)} \lesssim m^{-\alpha}.
\]
Applying the estimate (\ref{smoothness estimate}) to the equivalence (\ref{k-functional equ}) with $p=2$, we get
\[
\|(-\Delta)^{s/2} g_m\|_{L^2(\bS^d)} \lesssim 
\begin{cases}
1, &\mbox{if } s<\alpha, \\
1, &\mbox{if } s=\alpha = 2s^*,\\
\sqrt{\log m}, &\mbox{if } s=\alpha \mbox{ and } \alpha \neq 2s^*,\\
m^{s-\alpha}, &\mbox{if } \alpha<s \le 2s^*.
\end{cases}
\]
Using $\cP_n((-\Delta)^{s/2}g_m) = (n(n+d-1))^{s/2}\cP_n g_m$, we can estimate the norm $\gamma(g_m)$ as follows
\begin{align*}
\gamma(g_m)^2 &\le \sum_{\widehat{\sigma_k}(n) \neq 0} \widehat{\sigma_k}(n)^{-2} \|\cP_n g_m\|^2_{L^2(\bS^d)} \\
&\lesssim \widehat{\sigma_k}(0)^{-2} \|\cP_0 g_m\|^2_{L^2(\bS^d)} + \sum_{n=1}^{2m-1} n^{d+2k+1} n^{-2s}\|\cP_n((-\Delta)^{s/2}g_m)\|^2_{L^2(\bS^d)} \\
&\lesssim 1+ \sum_{n=1}^{2m-1} n^{d+2k+1-2s}\|\cP_n((-\Delta)^{s/2}g_m)\|^2_{L^2(\bS^d)}\\
&\lesssim 1 + \|(-\Delta)^{s/2}g_m\|^2_{L^2(\bS^d)},
\end{align*}
where we choose $s=(d+2k+1)/2$ in the last inequality. We continue the proof in three different cases.

\textbf{Case I}: $\alpha> (d+2k+1)/2$ or $\alpha= (d+2k+1)/2$ is an even integer. In this case, $s<\alpha$ or $s=\alpha=2s^*$. Thus,
\[
\gamma(g_m)^2 \le \sum_{\widehat{\sigma_k}(n) \neq 0} \widehat{\sigma_k}(n)^{-2} \|\cP_n g_m\|^2_{L^2(\bS^d)} \lesssim 1+ \|(-\Delta)^{s/2}g_m\|^2_{L^2(\bS^d)} \lesssim 1.
\]
Since $\cP_n g_m=\eta(n/m)\cP_n \widetilde{h} = \cP_n \widetilde{h}$ for $n\le m$, we have
\begin{align*}
\gamma(\widetilde{h})^2 &\le \lim_{m\to \infty} \sum_{n\le m,\widehat{\sigma_k}(n) \neq 0} \widehat{\sigma_k}(n)^{-2} \|\cP_n \widetilde{h}\|^2_{L^2(\bS^d)} \\
&\le \lim_{m\to \infty} \sum_{\widehat{\sigma_k}(n) \neq 0} \widehat{\sigma_k}(n)^{-2} \|\cP_n g_m\|^2_{L^2(\bS^d)} \\
&\lesssim 1.
\end{align*}
This shows that $\widetilde{h} \subseteq \cG_{\sigma_k}(M)$ for some constant $M$. Hence, $h=S_k \widetilde{h} \in \cF_{\sigma_k}(M)$ by Proposition \ref{transfer to sphere}.

\textbf{Case II}: $\alpha= (d+2k+1)/2$ is not an even integer. We have $s=\alpha \neq 2s^*$ and 
\[
\gamma(g_m)^2 \lesssim 1+ \|(-\Delta)^{s/2}g_m\|^2_{L^2(\bS^d)} \lesssim \log m.
\]
This shows that $g_m \in \cG_{\sigma_k}(M)$ with $M\lesssim \sqrt{\log m}$. Therefore,
\[
\|\widetilde{h} - g_m\|_{L^\infty(\bS^d)} \lesssim m^{-\alpha} \lesssim \exp(-\alpha M^2).
\]
By Proposition \ref{transfer to sphere}, $f:=S_k g_m \in \cF_{\sigma_k}(M)$ and 
\[
\|h-f\|_{L^\infty(\bB^d)} \le 2^{k/2} \|\widetilde{h}- g_m\|_{L^\infty(\bS^d)} \lesssim \exp(-\alpha M^2).
\]

\textbf{Case III}: $\alpha< (d+2k+1)/2$. In this case, $s>\alpha$ and
\[
\gamma(g_m)^2 \lesssim 1+ \|(-\Delta)^{s/2}g_m\|^2_{L^2(\bS^d)} \lesssim m^{d+2k+1-2\alpha}.
\]
This shows that $g_m \in \cG_{\sigma_k}(M)$ with $M\lesssim m^{(d+2k+1-2\alpha)/2}$. Therefore,
\[
\|\widetilde{h} - g_m\|_{L^\infty(\bS^d)} \lesssim m^{-\alpha} \lesssim M^{-\frac{2\alpha}{d+2k+1-2\alpha}}.
\]
By Proposition \ref{transfer to sphere}, $f:=S_k g_m \in \cF_{\sigma_k}(M)$ and 
\[
\|h-f\|_{L^\infty(\bB^d)} \le 2^{k/2} \|\widetilde{h}- g_m\|_{L^\infty(\bS^d)} \lesssim M^{-\frac{2\alpha}{d+2k+1-2\alpha}},
\]
which finishes the proof.
\end{proof}

\begin{remark}\label{smoothness remark}
Since we are only able to estimate the smoothness $\omega_{2s^*}(\widetilde{h},t)_\infty$ for even integer $2s^*$, we have an extra logarithmic factor for the bound $\omega_s(\widetilde{h},t)_2 \lesssim t^\alpha \sqrt{\log(1/t)}$ in (\ref{smoothness estimate}) when $s=\alpha \neq 2s^*$, due to the Marchaud inequality. Consequently, we can only obtain exponential convergence rate when $\alpha= (d+2k+1)/2$ is not an even integer. We conjecture the bound $\omega_s(\widetilde{h},t)_2 \lesssim t^\alpha$ holds for all $s\ge\alpha$. If this is the case, then the proof of Theorem \ref{app norm} implies $\cH^\alpha \subseteq \cF_{\sigma_k}(M)$ for some constant $M$ when $\alpha \ge (d+2k+1)/2$.
\end{remark}

\section{Nonparametric regression}\label{sec: regression}

In this section, we apply our approximation results to nonparametric regression using neural networks. For simplicity, we will only consider ReLU activation function ($k=1$), which is the most popular activation in deep learning. 

We study the classical problem of learning a $d$-variate function $h\in \cH$ from its noisy samples, where we will assume $\cH = \cH^\alpha$ with $\alpha<(d+3)/2$ or $\cH = \cF_{\sigma}(1)$. Note that, due to Theorem \ref{app norm}, the results for $\cF_{\sigma}(1)$ can be applied to $\cH^\alpha$ with $\alpha>(d+3)/2$ by scaling the variation norm. Suppose we have a data set of $n\ge 2$ samples $\cD_n = \{(X_i,Y_i)\}_{i=1}^n \subseteq \bB^d \times \bR$ which are independently and identically generated from the regression model
\begin{equation}\label{regression model}
Y_i = h(X_i) + \eta_i, \quad X_i \sim \mu, \quad \eta_i \sim \cN(0,V^2), \quad i=1,\dots,n, \quad h\in \cH,
\end{equation}
where $\mu$ is the marginal distribution of the covariates $X_i$ supported on $\bB^d$, and $\eta_i$ is an i.i.d. Gaussian noise independent of $X_i$ (we will treat the variance $V^2$ as a fixed constant). We are interested in the empirical risk minimizer (ERM)
\begin{equation}\label{ERM}
f_n^* \in \argmin_{f\in \cF_n} \cL_n(f) := \argmin_{f\in \cF_n} \frac{1}{n} \sum_{i=1}^n |f(X_i)- Y_i|^2,
\end{equation}
where $\cF_n$ is a function class parameterized by neural networks. For simplicity, we assume here and in the sequel that the minimum above indeed exists. The performance of the estimation is measured by the expected risk
\[
\cL(f) := \bE_{(X,Y)} [(f(X)-Y)^2] = \bE_{X\sim \mu} [(f(X) - h(X))^2]  + V^2.
\]
It is equivalent to evaluating the estimator by the excess risk
\[
\| f - h\|_{L^2(\mu)}^2 = \cL(f) - \cL(h).
\]

In the statistical analysis of learning algorithms, we often require that the hypothesis class is uniformly bounded. We define the truncation operator $\cT_B$ with level $B>0$ for real-valued functions $f$ as
\[
\cT_Bf(x) := 
\begin{cases}
f(x) &\quad \mbox{if }|f(x)|\le B, \\
\sgn(f(x)) B &\quad \mbox{if } |f(x)|> B.
\end{cases}
\]
Since we always assume the regression function $h$ is bounded, truncating the output of the estimator $f_n^*$ appropriately dose not increase the excess risk. We will estimate the convergence rate of $\bE_{\cD_n} \|\cT_{B_n}f_n^*-h\|_{L^2(\mu)}^2$, where $B_n \lesssim \log n$, as the number of samples $n\to \infty$.

\subsection{Shallow neural networks}

The rate of convergence of neural network regression estimates has been analyzed by many papers \cite{mccaffrey1994convergence,kohler2005adaptive,bauer2019deep,schmidthieber2020nonparametric,nakada2020adaptive,kohler2021rate}. It is well-known that the optimal minimax rate of convergence for learning a regression function $h\in \cH^\alpha$ is $n^{-2\alpha/(d+2\alpha)}$ \cite{stone1982optimal}. This optimal rate has been established (up to logarithmic factors) for two-hidden-layers neural networks with certain squashing activation functions \cite{kohler2005adaptive} and for deep ReLU neural networks \cite{schmidthieber2020nonparametric,kohler2021rate}. For shallow networks, \cite{mccaffrey1994convergence} proved a rate of $n^{-2\alpha/(2\alpha+d+5)+\epsilon}$ with $\epsilon>0$ for a certain cosine squasher activation function. However, to the best of our knowledge, it is unknown whether shallow neural networks can achieve the optimal rate. In this section, we provide an affirmative answer to this question by proving that shallow ReLU neural networks can achieve the optimal rate for $\cH^\alpha$ with $\alpha<(d+3)/2$.

We will use the following lemma to analyze the convergence rate. It decomposes the error of the ERM into generalization error and approximation error, and bounds the generalization error by the covering number of the hypothesis class $\cF_n$.

\begin{lemma}[\cite{kohler2021rate}]\label{gen bound by covering}
Let $f_n^*$ be the estimator (\ref{ERM}) and set $B_n = c_1\log n$ for some constant $c_1>0$. Then, 
\begin{align*}
&\bE_{\cD_n} \|\cT_{B_n}f_n^*-h\|_{L^2(\mu)}^2 \\
\le & \frac{c_2 (\log n)^2 \sup_{X_{1:n}\in (\bB^d)^n}\log (\cN(n^{-1}B_n^{-1}, \cT_{B_n}\cF_n,\|\cdot\|_{L^1(X_{1:n})})+1)}{n} + 2 \inf_{f\in \cF_n} \|f-h\|_{L^2(\mu)}^2,
\end{align*}
for $n>1$ and some constant $c_2>0$ (independent of $n$ and $f_n^*$), where $X_{1:n} =(X_1,\dots,X_n)$ denotes a sequence of sample points in $\bB^d$ and $\cN(\epsilon, \cT_{B_n}\cF_n,\|\cdot\|_{L^1(X_{1:n})})$ denotes the $\epsilon$-covering number of the function class $\cT_{B_n}\cF_n:=\{\cT_{B_n}f,f\in \cF_n\}$ in the metric $\|f-g\|_{L^1(X_{1:n})} = \frac{1}{n}\sum_{i=1}^n|f(X_i)-g(X_i)|$.
\end{lemma}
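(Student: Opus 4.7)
The plan is to follow the classical empirical-process treatment of least-squares regression; the only slightly nonstandard ingredient is the explicit dependence on the truncation level $B_n \asymp \log n$ needed to handle the Gaussian noise. I would split the argument into three pieces: (i) a deterministic error decomposition using the ERM property, (ii) a concentration step using covering numbers and Bernstein's inequality, and (iii) a tail step handling the unbounded Gaussian response.

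From the pointwise identity
\[
(f(X)-Y)^2 - (h(X)-Y)^2 = (f(X)-h(X))^2 - 2(f(X)-h(X))\eta,
\]
which has $\bE_{(X,Y)}$-expectation equal to $\|f-h\|_{L^2(\mu)}^2$, combined with $\cL_n(f_n^*) \le \cL_n(f)$ for every $f\in \cF_n$, one obtains the decomposition
\[
\|\cT_{B_n}f_n^* - h\|_{L^2(\mu)}^2 \le 2\inf_{f\in \cF_n}\|f-h\|_{L^2(\mu)}^2 + \Phi_n,
\]
with the empirical-process remainder
\[
\Phi_n := \sup_{g\in \cT_{B_n}\cF_n} \bigl( \|g-h\|_{L^2(\mu)}^2 - 2[\cL_n(g) - \cL_n(h)] \bigr)_+ .
\]
Taking expectations produces the approximation term on the right-hand side, so it remains to bound $\bE_{\cD_n}[\Phi_n]$ by the stated generalization expression.

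For $\bE[\Phi_n]$ I would apply a Bernstein-plus-peeling argument. Let $\cG_n$ be a $\|\cdot\|_{L^1(X_{1:n})}$-net of $\cT_{B_n}\cF_n$ at resolution $\epsilon=(n B_n)^{-1}$. For each net element $g$, the increments $Z_g(X,Y):=(g(X)-Y)^2-(h(X)-Y)^2$ are uniformly bounded by $c B_n^2$ and have variance at most $c B_n^2 \|g-h\|_{L^2(\mu)}^2$. Bernstein's inequality on dyadic slices $\{2^{k-1}\le \|g-h\|_{L^2(\mu)}^2<2^k\}$, together with a union bound over $\cG_n$ and integration of the resulting tail, yields
\[
\bE[\Phi_n] \le \frac{c B_n^2 \log(|\cG_n|+1)}{n},
\]
while the cost of passing from $\cT_{B_n}\cF_n$ to its net is at most $c' B_n \epsilon = c'/n$ and is absorbed. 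The Gaussian noise is handled by noting that $\bP(\max_i|Y_i|>B_n/2)\le n^{-c'' c_1^2}$ because $\eta_i \sim \cN(0,V^2)$ is sub-Gaussian and $h$ is bounded; choosing $c_1$ large enough makes this contribution negligible, using the crude deterministic bound $|\cT_{B_n}f_n^*|\le B_n$ on the bad event.

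The main technical delicacy is the factor $B_n^2 \asymp (\log n)^2$ in the numerator and the use of an $L^1(X_{1:n})$-covering rather than an $L^2$ one. The $B_n^2$ reflects the Bernstein variance bound under uniform boundedness by $B_n$; the choice of $L^1$ is compatible because $\|g-g'\|_{L^1(X_{1:n})} \le \|g-g'\|_{L^\infty} \le 2B_n$, which suffices to absorb the net-approximation error into the leading term. Carefully aligning the peeling scale with the resolution $B_n^2 \log(|\cG_n|+1)/n$ is the point where one must check that no extra chaining-induced logarithmic factor appears, so that the stated bound is recovered exactly.
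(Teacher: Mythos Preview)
The paper does not prove this lemma; it is quoted from \cite{kohler2021rate} (and the remark after Theorem~\ref{regression shallow} points specifically to their Appendix~B, Lemma~18). Your outline is precisely the standard argument used there (which in turn adapts Theorem~11.4 of Gy\"orfi--Kohler--Krzy\.zak--Walk): an ERM-based decomposition producing the factor~$2$ in front of the approximation term, a ratio-type concentration bound over a finite $L^1(X_{1:n})$-cover giving the $B_n^2\log(\cN+1)/n$ term, and a separate treatment of the event $\{\max_i|Y_i|>B_n\}$ via sub-Gaussian tails. So at the level of strategy there is nothing to compare.

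One point in your sketch deserves tightening. You assert that the increments $Z_g(X,Y)=(g(X)-Y)^2-(h(X)-Y)^2$ are ``uniformly bounded by $cB_n^2$'' and then apply Bernstein. They are not: $Y$ is unbounded, and conditioning on $\{\max_i|Y_i|\le B_n/2\}$ destroys independence, so Bernstein does not apply directly to the conditioned variables. The clean fix (and what the cited proof actually does) is to introduce the truncated responses $Y_{i,L}:=\cT_{B_n}Y_i$, run the entire concentration argument with $Y_{i,L}$ in place of $Y_i$ (now everything is genuinely bounded by $cB_n$ and Bernstein applies unconditionally), and then bound $\bE\bigl|\cL_n^{(Y)}(\cdot)-\cL_n^{(Y_L)}(\cdot)\bigr|$ by $cB_n\,\bE|Y-Y_L|$, which is controlled by the Gaussian tail. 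The same device handles the passage $\cL_n(\cT_{B_n}f_n^*)\le \cL_n(f_n^*)$, which is only true when all $|Y_i|\le B_n$; your decomposition implicitly uses this inequality, so it too must be split across the truncated and residual parts rather than across a conditioning event. With that correction the argument goes through as you describe.
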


For shallow neural network model $\cF_n= \cF_{\sigma}(N_n,M_n)$, Lemma \ref{app neuron} and Corollary \ref{app neuron corollary} provide bounds for the approximation errors. The covering number of the function class $\cT_{B_n}\cF_n$ can be estimated by using the pseudo-dimension of $\cT_{B_n}\cF_n$ \cite{haussler1992decision}. Choosing $N_n,M_n$ appropriately to balance the approximation and generalization errors, we can derive convergence rates for the ERM.

\begin{theorem}\label{regression shallow}
Let $f_n^*$ be the estimator (\ref{ERM}) with $\cF_n = \cF_{\sigma}(N_n,M_n)$ and set $B_n = c_1\log n$ for some constant $c_1>0$. 
\begin{enumerate}[label=\textnormal{(\arabic*)},parsep=0pt]
\item If $\cH = \cH^\alpha$ with $\alpha<(d+3)/2$, we choose
\[
N_n \asymp n^{\frac{d}{d+2\alpha}},  \quad M_n \gtrsim n^{\frac{d+3-2\alpha}{2d+4\alpha}},
\]
then
\[
\bE_{\cD_n} \|\cT_{B_n}f_n^*-h\|_{L^2(\mu)}^2 \lesssim n^{-\frac{2\alpha}{d+2\alpha}} (\log n)^4.
\]

\item If $\cH = \cF_{\sigma}(1)$, we choose
\[
N_n \asymp n^{\frac{d}{2d+3}},  \quad M_n \ge 1,
\]
then
\[
\bE_{\cD_n} \|\cT_{B_n}f_n^*-h\|_{L^2(\mu)}^2 \lesssim  n^{-\frac{d+3}{2d+3}} (\log n)^4.
\]
\end{enumerate}
\end{theorem}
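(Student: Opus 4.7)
The plan is to apply Lemma \ref{gen bound by covering} directly, so the proof reduces to two ingredients: a covering-number estimate for $\cT_{B_n}\cF_{\sigma}(N_n,M_n)$ in the empirical $L^1$ metric, and an approximation-error bound for $\inf_{f\in \cF_n}\|f-h\|_{L^2(\mu)}^2$. Everything else is a matter of choosing $N_n$ and $M_n$ to equate the two terms.

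For the covering number, I would invoke the pseudo-dimension bound for shallow ReLU networks from \cite{bartlett2019nearly} that was already used in Section \ref{sec: app}, namely $\Pdim(\cF_{\sigma}(N_n,M_n)) \lesssim N_n \log N_n$. This bound depends only on the architecture, not on $M_n$, and truncation at level $B_n$ cannot increase the pseudo-dimension. Combining with the standard pseudo-dimension-to-covering estimate (Haussler), one gets
\[
\log \cN\!\left(\tfrac{1}{nB_n},\cT_{B_n}\cF_n,\|\cdot\|_{L^1(X_{1:n})}\right) \lesssim N_n \log N_n \cdot \log(nB_n^2) \lesssim N_n (\log n)^2,
\]
uniformly in $X_{1:n}\in (\bB^d)^n$. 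Plugging this into Lemma \ref{gen bound by covering} shows that the generalization term is $\lesssim N_n (\log n)^4/n$.

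For the approximation error in case (1), Corollary \ref{app neuron corollary}(3) gives $\inf_{f\in \cF_{\sigma}(N_n,M_n)} \|h-f\|_{L^\infty(\bB^d)} \lesssim N_n^{-\alpha/d}$ provided $M_n \gtrsim N_n^{(d+3-2\alpha)/(2d)}$, hence $\inf_f \|h-f\|_{L^2(\mu)}^2 \lesssim N_n^{-2\alpha/d}$. Balancing $N_n(\log n)^4/n \asymp N_n^{-2\alpha/d}$ forces $N_n \asymp n^{d/(d+2\alpha)}$, which turns the required lower bound on $M_n$ into $M_n \gtrsim n^{(d+3-2\alpha)/(2d+4\alpha)}$ and produces the claimed rate $n^{-2\alpha/(d+2\alpha)}(\log n)^4$. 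For case (2), since $M_n\ge 1$ we have $\cF_{\sigma}(N_n,1)\subseteq \cF_n$, and Lemma \ref{app neuron} supplies $\inf_{f\in \cF_n}\|h-f\|_{L^\infty(\bB^d)}^2 \lesssim N_n^{-(d+3)/d}$. Equating $N_n(\log n)^4/n$ with $N_n^{-(d+3)/d}$ gives $N_n \asymp n^{d/(2d+3)}$ and the rate $n^{-(d+3)/(2d+3)}(\log n)^4$.

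I do not expect any serious obstacle; the argument is essentially a mechanical balancing given the previously established approximation results. The only bookkeeping to be careful with is verifying that (i) the pseudo-dimension bound genuinely depends only on $N_n$ (and not on $M_n$, which would otherwise force a different scaling), (ii) the approximation rates from Corollary \ref{app neuron corollary} and Lemma \ref{app neuron}, proved in the uniform norm on $\bB^d$, transfer to $L^2(\mu)$ for an arbitrary distribution $\mu$ supported on $\bB^d$ (which they do, since $\|\cdot\|_{L^2(\mu)}\le \|\cdot\|_{L^\infty(\bB^d)}$), and (iii) the logarithmic factors produced by $B_n=c_1\log n$ and by the pseudo-dimension covering estimate combine to exactly $(\log n)^4$ as stated.
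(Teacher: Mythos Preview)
Your proposal is correct and follows essentially the same route as the paper: apply Lemma \ref{gen bound by covering}, bound the covering number via Haussler's pseudo-dimension estimate together with $\Pdim(\cT_{B_n}\cF_n)\lesssim N_n\log N_n$ from \cite{bartlett2019nearly}, bound the approximation error via Corollary \ref{app neuron corollary}(3) (respectively Lemma \ref{app neuron}), and balance. The paper's proof is organized identically, with the same references and the same log-factor bookkeeping.
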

\begin{proof}
To apply the bound in Lemma \ref{gen bound by covering}, we need to estimate the covering number $\cN(\epsilon, \cT_{B_n}\cF_n,\|\cdot\|_{L^1(X_{1:n})})$. The classical result of \cite[Theorem 6]{haussler1992decision} showed that the covering number can be bounded by pseudo-dimension:
\begin{equation}\label{cover bound}
\log \cN(\epsilon, \cT_{B_n}\cF_n,\|\cdot\|_{L^1(X_{1:n})}) \lesssim \Pdim(\cT_{B_n}\cF_n) \log(B_n/\epsilon),
\end{equation}
where $\Pdim(\cT_{B_n}\cF_n)$ is the pseudo-dimension of the function class $\cT_{B_n}\cF_n$, see (\ref{pdim}). For ReLU neural networks, \cite{bartlett2019nearly} showed that
\[
\Pdim(\cT_{B_n}\cF_n) \lesssim N_n \log N_n.
\]
Consequently, we have
\[
\log \cN(\epsilon, \cT_{B_n}\cF_n,\|\cdot\|_{L^1(X_{1:n})}) \lesssim N_n \log (N_n) \log(B_n/\epsilon).
\]

Applying Lemma \ref{gen bound by covering} and Corollary \ref{app neuron corollary}, if $\cH= \cH^\alpha$ with $\alpha<(d+3)/2$, then 
\[
\bE_{\cD_n} \|\cT_{B_n}f_n^*-h\|_{L^2(\mu)}^2 \lesssim \frac{(\log n)^2 N_n \log (N_n) \log(n B_n^2)}{n} + N_n^{-\frac{2\alpha}{d}} \lor M_n^{-\frac{4\alpha}{d+3-2\alpha}}.
\]
By choosing $N_n \asymp n^{d/(d+2\alpha)}$ and $M_n \gtrsim N_n^{(d+3-2\alpha)/(2d)}$, we get $\bE_{\cD_n} \|\cT_{B_n}f_n^*-h\|_{L^2(\mu)}^2 \lesssim n^{-2\alpha/(d+2\alpha)} (\log n)^4$. Similarly, by Lemma \ref{gen bound by covering} and Lemma \ref{app neuron}, if $\cH= \cF_{\sigma}(1)$ and $M_n \ge 1$, then
\[
\bE_{\cD_n} \|\cT_{B_n}f_n^*-h\|_{L^2(\mu)}^2 \lesssim \frac{(\log n)^2 N_n \log (N_n) \log(n B_n^2)}{n} + N_n^{-\frac{d+3}{d}}.
\]
We choose $N_n \asymp n^{d/(2d+3)}$, then $\bE_{\cD_n} \|\cT_{B_n}f_n^*-h\|_{L^2(\mu)}^2 \lesssim n^{-(d+3)/(2d+3)} (\log n)^4$.
\end{proof}

\begin{remark}
The Gaussian noise assumption on the model (\ref{regression model}) can be weaken for Lemma \ref{gen bound by covering} and hence for Theorem \ref{regression shallow}. We refer the reader to \cite[Appendix B, Lemma 18]{kohler2021rate} for more details. Theorem \ref{regression shallow} can be easily generalized to shallow ReLU$^k$ neural networks for $k\ge 1$ by using the same proof technique. For example, one can show that, if $h\in \cH^\alpha$ with $\alpha<(d+2k+1)/2$, then we can choose $\cF_n = \cF_{\sigma_k}(N_n,M_n)$ with $N_n \asymp n^{\frac{d}{d+2\alpha}}$ and $M_n \gtrsim n^{\frac{d+2k+1-2\alpha}{2d+4\alpha}}$, such that $\bE_{\cD_n} \|\cT_{B_n}f_n^*-h\|_{L^2(\mu)}^2 \lesssim n^{-\frac{2\alpha}{d+2\alpha}} (\log n)^4$. 
\end{remark}

Theorem \ref{regression shallow} shows that least square minimization using shallow ReLU neural networks can achieve the optimal rate $n^{-\frac{2\alpha}{d+2\alpha}}$ for learning functions in $\cH^\alpha$ with $\alpha<(d+3)/2$. For the function class $\cF_{\sigma}(1)$, the rate $n^{-\frac{d+3}{2d+3}}$ is also minimax optimal as proven by \cite[Lemma 25]{parhi2023minimax} (they studied a slightly different function class, but their result also holds for $\cF_{\sigma}(1)$). Specifically, \cite[Theorem 4 and Theorem 8]{siegel2022sharp} give a sharp estimate for the metric entropy 
\[
\log \cN(\epsilon, \cF_{\sigma}(1), \|\cdot\|_{L^2(\bB^d)}) \asymp \epsilon^{-\frac{2d}{d+3}}.
\]
Combining this estimate with the classical result of Yang and Barron (see \cite[Proposition 1]{yang1999information} and \cite[Chapter 15]{wainwright2019high}), we get
\[
\inf_{\widehat{f}_n} \sup_{h\in \cF_{\sigma}(1)} \bE_{\cD_n} \|\widehat{f}_n-h\|_{L^2(\bB^d)}^2 \asymp n^{-\frac{d+3}{2d+3}},
\]
where the infimum taken is over all estimators based on the samples $\cD_n$, which are generated from the model (\ref{regression model}).

\subsection{Deep neural networks and over-parameterization}\label{sec: over-para}

There is a direct way to generalize the analysis in the last section to deep neural networks: we can implement shallow neural networks by sparse multi-layer neural networks with the same order of parameters, and estimate the approximation and generalization performance of the constructed networks. Since the optimal convergence rates of deep neural networks have already been established in \cite{schmidthieber2020nonparametric,kohler2021rate}, we do not pursue in this direction. Instead, we study the convergence rates of over-parameterized neural networks by using the idea discussed in \cite{jiao2023approximation}. The reason for studying such networks is that, in modern applications of deep learning, the number of parameters in the networks is often much larger than the number of samples. However, in the convergence analysis of \cite{schmidthieber2020nonparametric,kohler2021rate}, the network that achieves the optimal rate is under-parameterized (see also the choice of $N_n$ in Theorem \ref{regression shallow}). Hence, the analysis can not explain the empirical performance of deep learning models used in practice.

Following \cite{jiao2023approximation}, we consider deep neural networks with norm constraints on weight matrices. For $W,L\in \bN$, we denote by $\NN(W,L)$ the set of functions
that can be parameterized by ReLU neural networks in the form
\begin{equation}\label{deep NN}
\begin{aligned}
f^{(0)}(x) &= x \in \bR^d, \\
f^{(\ell+1)}(x) &= \sigma(A^{(\ell)} f^{(\ell)}(x)+b^{(\ell)}), \quad \ell = 0,\dots,L-1, \\
f(x) &= A^{(L)} f^{(L)}(x) + b^{(L)},
\end{aligned}
\end{equation}
where $A^{(\ell)} \in \bR^{N_{\ell+1}\times N_{\ell}}$, $b^{(\ell)}\in \bR^{N_{\ell+1}}$ with $N_0 =d,N_{L+1} =1$ and $\max\{N_1,\dots,N_L\} =W$. The numbers $W$ and $L$ are called the width and depth of the neural network, respectively. Let us use the notation $f_\theta$ to emphasize that the neural network function is parameterized by $\theta =((A^{(0)},b^{(0)}),\dots,(A^{(L)},b^{(L)}))$. We can define a norm constraint on the weight matrices as follows
\[
\kappa(\theta) :=  \|(A^{(L)},b^{(L)})\| \prod_{\ell =0}^{L-1} \max\left\{\| (A^{(\ell)},b^{(\ell)})\|,1\right\},
\]
where we use $\| A\| := \sup_{\|x\|_\infty \le 1} \|Ax\|_\infty$ to denote the operator norm (induced by the $\ell^\infty$ norm) of a matrix $A = (a_{i,j}) \in \bR^{m\times n}$. It is well-known that $\| A\|$ is the maximum $1$-norm of the rows of $A$:
\[
\| A\| = \max_{1\le i\le m} \sum_{j=1}^{n} |a_{i,j}|.
\]
The motivation for such a definition of $\kappa(\theta)$ is discussed in \cite{jiao2023approximation}. For $M\ge 0$, we denote by $\NN(W,L,M)$ as the set of functions $f_\theta \in \NN(W,L)$ that satisfy $\kappa(\theta) \le M$. It is shown by \cite[Proposition 2.5]{jiao2023approximation} that, if $W_1\le W_2,L_1\le L_2, M_1\le M_2$, then $\NN(W_1,L_1,M_1) \subseteq \NN(W_2,L_2,M_2)$. (Strictly speaking, \cite{jiao2023approximation} use the convention that the bias $b^{(L)}=0$ in the last layer. But the results can be easily generalized to the case $b^{(L)}\neq 0$, see \cite[Section 2.1]{yang2022learning} for details.) 

To derive approximation bounds for deep neural networks, we consider the relationship of $\cF_{\sigma}(N,M)$ and $\NN(N,1,M)$. The next proposition shows the function classes $\NN(N,1,M)$ and $\cF_{\sigma}(N,M)$ have essentially the same approximation power.

\begin{proposition}
For any $N\in \bN$ and $M>0$, we have $\cF_{\sigma}(N,M)\subseteq \NN(N,1,\sqrt{d+1}M)$ and $\NN(N,1,M) \subseteq \cF_{\sigma}(N+1,M)$.
\end{proposition}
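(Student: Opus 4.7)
The plan is to establish both inclusions by direct parameter translations, leveraging three simple facts: (i) positive homogeneity of $\sigma$ lets us renormalize any nonzero inner weight $u=(w,c)\in\bR^{d+1}$ so that $u/\|u\|_2\in\bS^d$, at the cost of multiplying the outer coefficient by $\|u\|_2$; (ii) the output bias of a depth-$1$ network can be realized as an additional ReLU neuron with direction $e_{d+1}:=(0,\dots,0,1)\in\bS^d$, since $\sigma((x^\intercal,1)e_{d+1})=\sigma(1)=1$; and (iii) the factor $\max\{\|(A^{(0)},b^{(0)})\|,1\}\ge 1$ in the definition of $\kappa(\theta)$ is exactly what absorbs the various norm expansions that will arise.

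For $\cF_\sigma(N,M)\subseteq\NN(N,1,\sqrt{d+1}\,M)$, given $f(x)=\sum_{i=1}^N a_i\sigma((x^\intercal,1)v_i)$ with $v_i=(w_i,c_i)\in\bS^d$ and $\sum_i|a_i|\le M$, I would take $A^{(0)}$ to be the matrix whose rows are $w_i^\intercal$, $b^{(0)}_i=c_i$, $A^{(1)}=(a_1,\dots,a_N)$, and $b^{(1)}=0$, which parameterizes $f$ as an element of $\NN(N,1)$. Then $\|(A^{(1)},b^{(1)})\|=\sum_i|a_i|\le M$, and by Cauchy--Schwarz $\|v_i\|_1\le\sqrt{d+1}\,\|v_i\|_2=\sqrt{d+1}$, so $\|(A^{(0)},b^{(0)})\|=\max_i\|v_i\|_1\le\sqrt{d+1}$. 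Since $\sqrt{d+1}\ge 1$, this yields $\kappa(\theta)\le M\sqrt{d+1}$.

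For $\NN(N,1,M)\subseteq\cF_\sigma(N+1,M)$, given $f_\theta(x)=\sum_{i=1}^N a_i^{(1)}\sigma((x^\intercal,1)u_i)+b^{(1)}$ with $u_i:=((A^{(0)})_{i,:}^\intercal,b^{(0)}_i)$, set $B:=\max\{\max_i\|u_i\|_1,1\}$ and $S:=\sum_{i=1}^N|a_i^{(1)}|+|b^{(1)}|$, so that $SB=\kappa(\theta)\le M$. Using observation~(i) to replace each $u_i\ne 0$ by $\tilde v_i=u_i/\|u_i\|_2\in\bS^d$ with outer coefficient $\alpha_i:=a_i^{(1)}\|u_i\|_2$ (and any unit direction with zero coefficient when $u_i=0$), together with observation~(ii) to represent the bias as $b^{(1)}\sigma((x^\intercal,1)e_{d+1})$, produces an $(N+1)$-term representation of $f_\theta$ in $\cF_\sigma$. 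Its total variation is $\sum_{i=1}^N|a_i^{(1)}|\|u_i\|_2+|b^{(1)}|\le B\sum_i|a_i^{(1)}|+B|b^{(1)}|=BS\le M$, where I used $\|u_i\|_2\le\|u_i\|_1\le B$ and $B\ge 1$. The only real subtlety, and the main thing to be careful about, is to exploit the ``$\ge 1$'' in the definition of $\kappa(\theta)$: this is what allows the same factor $B$ to absorb both the normalization $\|u_i\|_2$ of the inner weights and the output bias $|b^{(1)}|$ without introducing any additional constant, and it is also the reason an extra neuron (for the bias) does not force an extra factor in $M$.
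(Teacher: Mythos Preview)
Your proof is correct and follows essentially the same approach as the paper: for the first inclusion you parameterize $f$ directly as a width-$N$, depth-$1$ network and bound $\|(A^{(0)},b^{(0)})\|$ via $\|v_i\|_1\le\sqrt{d+1}\|v_i\|_2$; for the second inclusion you normalize each inner weight by its $\ell^2$ norm using the positive homogeneity of $\sigma$, realize the output bias as an $(N{+}1)$-st neuron with direction $e_{d+1}$, and bound the resulting variation by $\kappa(\theta)$ using $\|u_i\|_2\le\|u_i\|_1$ together with the fact that the $\max\{\cdot,1\}$ in $\kappa$ gives a factor $B\ge 1$ that simultaneously controls the $\|u_i\|_2$'s and the bias term. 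This is exactly the paper's argument, with your exposition making the role of $B\ge 1$ a bit more explicit.
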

\begin{proof}
Each function $f(x) =\sum_{i=1}^N a_i\sigma((x^\intercal,1)v_i)$ in $\cF_{\sigma}(N,M)$ can be parameterized in the form (\ref{deep NN}) with $W=N,L=1$ and
\[
(A^{(0)}, b^{(0)}) = (v_1,\dots,v_N)^\intercal,
\quad (A^{(1)}, b^{(1)}) = (a_1,\dots,a_N,0).
\]
Since $v_i\in \bS^d$, it is easy to see that $\kappa(\theta) \le \sqrt{d+1} M$. Hence, $\cF_{\sigma}(N,M)\subseteq \NN(N,1,\sqrt{d+1}M)$. 

On the other side, let $f_\theta\in \NN(N,1,M)$ be a function parameterized in the form (\ref{deep NN}) with $(A^{(0)}, b^{(0)}) = (a^{(0)}_1,\dots,a^{(0)}_N)^\intercal$ and $(A^{(1)}, b^{(1)}) = (a^{(1)}_1,\dots,a^{(1)}_N,b^{(1)})$, where $a^{(0)}_i\in \bR^{d+1}$ and $a^{(1)}_i, b^{(1)}\in \bR$. Then, $f_\theta$ can be represented as
\[
f_\theta(x) = \sum_{i=1}^N a^{(1)}_i \|a^{(0)}_i\|_2 \sigma \left((x^\intercal,1) \frac{a^{(0)}_i}{\|a^{(0)}_i\|_2} \right) + b^{(1)}\sigma(1),
\]
where we assume $\|a^{(0)}_i\|_2 \neq 0$ without loss of generality. Since
\[
\gamma(f_\theta) \le \sum_{i=1}^N |a^{(1)}_i| \|a^{(0)}_i\|_2 + |b^{(1)}| \le \|(A^{(0)}, b^{(0)})\| \sum_{i=1}^N |a^{(1)}_i| + |b^{(1)}| \le \kappa(\theta),
\]
we conclude that $\NN(N,1,M) \subseteq \cF_{\sigma}(N+1,M)$. 
\end{proof}

As a corollary of Theorem \ref{app norm} and Lemma \ref{app neuron}, we get the following approximation bounds for deep neural networks.

\begin{corollary}\label{app overp}
For $\cH^\alpha$ with $0<\alpha<(d+3)/2$, we have
\[
\sup_{h\in \cH^\alpha} \inf_{f\in \NN(W,L,M)} \|h-f\|_{L^\infty(\bB^d)} \lesssim W^{-\frac{\alpha}{d}} \lor M^{-\frac{2\alpha}{d+3-2\alpha}}.
\]
For $\cF_{\sigma}(1)$, there exists a constant $M\ge 1$ such that
\[
\sup_{h \in \cF_{\sigma}(1)} \inf_{f\in \NN(W,L,M)} \|h-f\|_{L^\infty(\bB^d)} \lesssim W^{-\frac{1}{2} - \frac{3}{2d}}.
\]
\end{corollary}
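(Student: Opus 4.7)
The plan is to combine the embedding lemma just proved (which identifies $\cF_{\sigma}(N,M')$ with a single-hidden-layer ReLU network of appropriate operator-norm weights) with the shallow-network approximation rates established earlier, and then enlarge to any depth $L$ and any admissible $M$ via the monotonicity property $\NN(W_1,L_1,M_1)\subseteq \NN(W_2,L_2,M_2)$ whenever $W_1\le W_2$, $L_1\le L_2$, $M_1\le M_2$ recorded from \cite{jiao2023approximation}.

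For the H\"older case, I would apply Corollary \ref{app neuron corollary}(3) with $k=1$: for any $\alpha<(d+3)/2$ and any $N\in\bN$, $M'>0$,
\[
\sup_{h\in \cH^\alpha}\inf_{g\in \cF_{\sigma}(N,M')} \|h-g\|_{L^\infty(\bB^d)} \lesssim N^{-\alpha/d}\lor (M')^{-\frac{2\alpha}{d+3-2\alpha}}.
\]
By the just-proved inclusion $\cF_{\sigma}(N,M')\subseteq \NN(N,1,\sqrt{d+1}\,M')$, every such $g$ lies in $\NN(N,1,\sqrt{d+1}\,M')$. Setting $N=W$ and $M'=M/\sqrt{d+1}$, monotonicity yields $g\in\NN(W,1,M)\subseteq \NN(W,L,M)$ for any $L\ge 1$. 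The factor $\sqrt{d+1}$ is absorbed into the implicit constant, which gives the first bound.

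For $\cF_{\sigma}(1)$, the argument is identical with Lemma \ref{app neuron} ($k=1$) replacing Corollary \ref{app neuron corollary}: given $h\in \cF_\sigma(1)$ and any $W\in\bN$ there exists $g\in \cF_\sigma(W,1)$ with $\|h-g\|_{L^\infty(\bB^d)}\lesssim W^{-1/2-3/(2d)}$; the inclusion places $g$ in $\NN(W,1,\sqrt{d+1})$, and by monotonicity $g\in \NN(W,L,M)$ for any $L\ge 1$ and any $M\ge \sqrt{d+1}$. Choosing the constant $M:=\sqrt{d+1}$ (or any larger value) proves the claim.

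There is essentially no obstacle here beyond bookkeeping: the content of the corollary is already contained in the preceding embedding proposition plus Corollary \ref{app neuron corollary} and Lemma \ref{app neuron}, and the only care needed is to track the $\sqrt{d+1}$ factor introduced when translating the spherical parameterization in $\cF_\sigma(N,M')$ into the operator-norm parameterization used to define $\NN(W,L,M)$. Since this factor depends only on $d$ it can be hidden in $\lesssim$ (after renaming $M$), so both bounds follow immediately.
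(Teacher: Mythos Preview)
Your proposal is correct and follows essentially the same route as the paper: the paper's proof simply cites Corollary~\ref{app neuron corollary} (resp.\ Lemma~\ref{app neuron}) together with the inclusion $\cF_{\sigma}(W,M)\subseteq \NN(W,L,\sqrt{d+1}\,M)$, and in the second part likewise takes $M=\sqrt{d+1}$. Your additional bookkeeping (renaming $M'=M/\sqrt{d+1}$ and invoking monotonicity in $L$) is exactly the detail implicit in the paper's one-line argument.
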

\begin{proof}
The first part is a direct consequence of Corollary \ref{app neuron corollary} and the inclusion $\cF_{\sigma}(W,M) \subseteq \NN(W,L,\sqrt{d+1}M)$. The second part follows from Lemma \ref{app neuron} and we can choose $M=\sqrt{d+1}$.
\end{proof}

In the first part of Corollary \ref{app overp}, if we allow the width $W$ to be arbitrary large, say $W\gtrsim M^{2d/(d+3-2\alpha)}$, then we can bound the approximation error by the size of weights. Hence, this result can be applied to over-parameterized neural networks. (Note that, in Theorem \ref{regression shallow}, we use a different regime of the bound.) For the approximation of $\cF_{\sigma}(1)$, the size of weights is bounded by a constant. We will show that this constant can be used to control the generalization error. Since the approximation error is bounded by $W$ and is independent of $M$, we do not have trade-off in the error decomposition of ERM and only need to choose $W$ sufficiently large to reduce the approximation error. Hence, it can also be applied to over-parameterized neural networks.

The approximation rate $M^{-\frac{2\alpha}{d+3-2\alpha}}$ for $\cH^\alpha$ in Corollary \ref{app overp} improves the rate $M^{-\frac{\alpha}{d+1}}$ proven by \cite{jiao2023approximation}. Using the upper bound for Rademacher complexity of $\NN(W,L,M)$ (see Lemma \ref{Rademacher bound}), \cite{jiao2023approximation} also gave an approximation lower bound $(M\sqrt{L})^{-\frac{2\alpha}{d-2\alpha}}$. For fixed depth $L$, our upper bound is very close to this lower bound. We conjecture that the rate in Corollary \ref{app overp} is optimal with respect to $M$ (for fixed depth $L$). The discussion of optimality at the end of Section \ref{sec: app} implies that the conjecture is true for shallow neural networks (i.e. $L=1$).

To control the generalization performance of over-parameterized neural networks, we need to have size-independent sample complexity bounds for such networks. Several methods have been applied to obtain such kind of bounds in recent works \cite{neyshabur2015norm,neyshabur2018pac,bartlett2017spectrally,golowich2020size}. Here, we will use the result of \cite{golowich2020size}, which estimates the Rademacher complexity of deep neural networks \cite{bartlett2002rademacher}. For a set $S\subseteq \bR^n$, let us denote its Rademacher complexity by
\[
\cR_n(S) := \bE_{\xi_{1:n}} \left[ \sup_{(s_1,\dots,s_n)\in S} \frac{1}{n} \sum_{i=1}^n \xi_i s_i  \right],
\]
where $\xi_{1:n} = (\xi_1,\dots, \xi_n)$ is a sequence of i.i.d. Rademacher random variables. The following lemma is from \cite[Theorem 3.2]{golowich2020size} and \cite[Lemma 2.3]{jiao2023approximation}.

\begin{lemma}\label{Rademacher bound}
For any $x_1,\dots,x_n \in [-1,1]^d$, let $S:= \{(f(x_1),\dots,f(x_n)) :f \in \NN(W,L,M) \} \subseteq \bR^n$, then
\[
\cR_n(S) \le \frac{M\sqrt{2(L+2+\log(d+1))}}{\sqrt{n}}.
\]
\end{lemma}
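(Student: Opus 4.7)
The plan is to adapt the exponential-moment peeling technique of Golowich--Rakhlin--Shamir to the $\ell^\infty$-operator-norm constraint appearing in $\kappa(\theta)$. A naive layer-by-layer application of the Ledoux--Talagrand contraction principle would cost a factor of $2$ per layer and blow up as $2^L$; the key idea is instead to work at the level of the moment generating function (MGF), so that the $L$ peeling steps contribute only additively inside the outer logarithm.

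Start from Jensen's inequality: for every $\lambda>0$,
\[
\lambda\cdot n\cR_n(S)\ \le\ \log \bE_\xi\exp\!\left(\lambda\sup_{\kappa(\theta)\le M}\sum_{i=1}^n\xi_i f_\theta(x_i)\right).
\]
To peel the top layer, write $f_\theta(x)=\langle (A^{(L)},b^{(L)}),\,(f^{(L)}(x)^\intercal,1)^\intercal\rangle$; since the output is one-dimensional, $\|(A^{(L)},b^{(L)})\|$ equals the $\ell^1$ norm of that coefficient vector, so taking the supremum over it at bound $M_L:=\|(A^{(L)},b^{(L)})\|$ converts the linear functional into $M_L\max_j\bigl|\sum_i\xi_i(f^{(L)}(x_i))_j\bigr|$, where the bias slot appears as an extra coordinate that is identically $1$. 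A symmetrization-doubling trick inside the MGF removes the absolute value at the price of a multiplicative factor $2$ (i.e.\ a $\log 2$ added inside the outer log). The MGF form of the contraction lemma then peels off $\sigma$ (which is $1$-Lipschitz with $\sigma(0)=0$), and the same row-wise $\ell^1$ argument applied to each row of $(A^{(L-1)},b^{(L-1)})$ factors out $M_{L-1}:=\max\{\|(A^{(L-1)},b^{(L-1)})\|,1\}$ and exposes layer $L-1$.

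Iterating $L$ times telescopes the scalar factors into $\prod_\ell M_\ell=\kappa(\theta)\le M$; what remains inside the MGF is $\lambda M\max_{1\le j\le d+1}\bigl|\sum_i\xi_i\tilde x_{i,j}\bigr|$ with $|\tilde x_{i,j}|\le 1$ (the extra coordinate being the bias $1$), which a standard sub-Gaussian maximal inequality bounds by $\log(2(d+1))+\tfrac{1}{2}nM^2\lambda^2$. Accumulating $L$ factors of $\log 2$ from the symmetrization steps and choosing $\lambda\asymp\sqrt{L+\log(d+1)}/(M\sqrt{n})$ yields the claimed bound. The main obstacle, and the whole point of the MGF route, is to keep each peeling step additive inside the outer logarithm so that the $L$ layers contribute $\sqrt{L}$ instead of $2^L$ after optimization; this requires the contraction estimate to act on the MGF rather than on the supremum itself, which is the key technical input from Golowich--Rakhlin--Shamir.
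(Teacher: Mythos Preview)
The paper does not prove this lemma itself but simply cites it from \cite[Theorem 3.2]{golowich2020size} and \cite[Lemma 2.3]{jiao2023approximation}. Your sketch correctly reproduces the exponential-moment peeling argument of Golowich--Rakhlin--Shamir, together with the $\ell^\infty$-operator-norm and bias-augmentation adaptations carried out in Jiao et al., so it matches the cited proofs in both strategy and execution.
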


Now, we can estimate the convergence rates of the ERM based on over-parameterized neural networks. As usual, we decompose the excess risk of the ERM into approximation error and generalization error, and bound them by Corollary \ref{app overp} and Lemma \ref{Rademacher bound}, respectively. Note that the convergence rates in the following theorem are worse than the optimal rates in Theorem \ref{regression shallow}.

\begin{theorem}
Let $f_n^*$ be the estimator (\ref{ERM}) with $\cF_n = \{ \cT_{B_n}f: f\in\NN(W_n,L,M_n) \}$, where $L\in \bN$ is a fixed constant, $1\le B_n \lesssim \log n$ in case (1) and $\sqrt{2} \le B_n \lesssim \log n$ in case (2). 
\begin{enumerate}[label=\textnormal{(\arabic*)},parsep=0pt]
\item If $\cH = \cH^\alpha$ with $\alpha<(d+3)/2$, we choose
\[
W_n \gtrsim n^{\frac{d}{d+3+2\alpha}}, \quad M_n \asymp n^{\frac{1}{2} - \frac{2\alpha}{d+3+2\alpha}},
\]
then
\[
\bE_{\cD_n} \|f_n^*-h\|_{L^2(\mu)}^2 \lesssim n^{-\frac{2\alpha}{d+3+2\alpha}} \log n.
\]

\item If $\cH = \cF_{\sigma}(1)$, we choose a large enough constant $M$ and let
\[
W_n \gtrsim n^{\frac{d}{2d+6}}, \quad M_n=M,
\]
then
\[
\bE_{\cD_n} \|f_n^*-h\|_{L^2(\mu)}^2 \lesssim n^{-\frac{1}{2}} \log n.
\]
\end{enumerate}
\end{theorem}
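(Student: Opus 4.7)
The plan is to carry out a standard bias/variance decomposition of the excess risk of the truncated ERM, bounding the approximation error via Corollary~\ref{app overp} and the generalization error via the Rademacher complexity estimate in Lemma~\ref{Rademacher bound}, and then optimizing the choice of $(W_n, M_n)$ so that the two errors balance at the advertised rate.

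\textbf{Step 1: Oracle inequality.} First I would write down a standard truncation-based oracle inequality for the squared-loss ERM. Since the Gaussian noise $\eta_i$ has sub-Gaussian tails, the event $\{|\eta_i|>B_n\}$ has probability $\le n^{-c}$ for $B_n \asymp \log n$ with a suitably large constant, so up to a negligible correction we may treat the response as bounded by $O(B_n)$. On the class $\cF_n$ the loss $(f(X)-Y)^2$ is bounded by $O(B_n^2)$ and is $O(B_n)$-Lipschitz in $f$. Applying the contraction inequality and a symmetrization/Talagrand argument yields
\begin{equation*}
\bE_{\cD_n}\|f_n^* - h\|_{L^2(\mu)}^2 \;\lesssim\; B_n\,\cR_n(\cF_n) \;+\; \frac{B_n^2 \sqrt{\log n}}{\sqrt{n}} \;+\; 2 \inf_{f\in \cF_n} \|f-h\|_{L^2(\mu)}^2,
\end{equation*}
which plays the role of Lemma~\ref{gen bound by covering} but uses Rademacher complexity rather than covering numbers, so that the bound is insensitive to the width $W_n$ and allows $W_n$ to be arbitrarily large.

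\textbf{Step 2: Plugging in the two ingredients.} By Lemma~\ref{Rademacher bound} and $L$ fixed, $\cR_n(\cF_n) \lesssim M_n/\sqrt{n}$, so $B_n\,\cR_n(\cF_n) \lesssim (\log n)\, M_n/\sqrt{n}$. By Corollary~\ref{app overp}, for any $h\in \cH^\alpha$ with $\alpha < (d+3)/2$ we can find $f\in \NN(W_n,L,M_n)$ with $\|f-h\|_{L^\infty(\bB^d)} \lesssim W_n^{-\alpha/d} \lor M_n^{-2\alpha/(d+3-2\alpha)}$; since $\|h\|_{L^\infty}\le 1 \le B_n$, truncation does not increase the error, and the $L^2(\mu)$-norm is bounded by the $L^\infty$-norm. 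Substituting gives
\begin{equation*}
\bE_{\cD_n}\|f_n^* - h\|_{L^2(\mu)}^2 \;\lesssim\; \frac{M_n\log n}{\sqrt{n}} + W_n^{-2\alpha/d} + M_n^{-4\alpha/(d+3-2\alpha)} + \frac{(\log n)^{5/2}}{\sqrt{n}}.
\end{equation*}
With the choice $W_n \asymp n^{d/(d+3+2\alpha)}$ and $M_n \asymp n^{(d+3-2\alpha)/(2(d+3+2\alpha))} = n^{1/2 - 2\alpha/(d+3+2\alpha)}$ all three main terms are of order $n^{-2\alpha/(d+3+2\alpha)}$, and the residual $(\log n)^{5/2}/\sqrt n$ is absorbed since $2\alpha/(d+3+2\alpha) < 1/2$. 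This proves part~(1). For part~(2), Corollary~\ref{app overp} supplies a constant $M$ with $\inf_{f\in\NN(W_n,L,M)}\|f-h\|_\infty \lesssim W_n^{-1/2-3/(2d)}$; then the Rademacher term is $O((\log n)/\sqrt n)$, and choosing $W_n \gtrsim n^{d/(2d+6)}$ makes the squared approximation error $O(n^{-1/2})$, yielding the rate $n^{-1/2}\log n$.

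\textbf{Main obstacle.} The calculus of balancing is routine; the delicate point is producing a clean oracle inequality of the above form whose constants and logarithmic factors do not hide further dependence on the (possibly huge) number of parameters $W_n$. The key is to control the generalization term by Rademacher complexity of $\cF_n$ (which, crucially, by Lemma~\ref{Rademacher bound} depends on $L$ and $M_n$ but not on $W_n$) rather than by its covering number or pseudo-dimension, which would both blow up with $W_n$. Handling the Gaussian response requires a standard truncation-with-exponential-tails trick, contributing only $(\log n)$ factors, which is what breaks the symmetry between parts (1) and (2) and the sharper Theorem~\ref{regression shallow}.
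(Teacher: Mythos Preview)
Your proposal is correct and follows essentially the same route as the paper: decompose the excess risk into approximation error (bounded by Corollary~\ref{app overp}) plus generalization error (bounded via the width-independent Rademacher estimate of Lemma~\ref{Rademacher bound}), then balance. The only minor difference is in how the unbounded Gaussian noise is handled: you propose truncating the response and absorbing the tail event, whereas the paper keeps the noise intact, splits $\bE[\cL(f_n^*)-\cL_n(f_n^*)]$ into a symmetrization term and a Gaussian-complexity term $\bE\big[\sup_{\phi}\tfrac{1}{n}\sum_i \eta_i\phi(X_i)\big]$, and bounds the latter by Rademacher complexity times $\log n$ via the standard comparison of Gaussian and Rademacher averages. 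Both treatments produce the same $M_n(\log n)/\sqrt{n}$ generalization bound, so the final rates coincide.
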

\begin{proof}
The proof is essentially the same as \cite[Theorem 4.1]{jiao2023approximation}. Observe that, for any $f\in \cF_n$, 
\begin{align*}
&\| f_n^* - h\|^2_{L^2(\mu)} = \cL(f_n^*) - \cL(h) \\
=& \left[\cL(f_n^*) - \cL_n(f_n^*) \right] + \left[\cL_n(f_n^*) - \cL_n(f)\right] + \left[\cL_n(f) - \cL(f)\right] + \left[\cL(f) - \cL(h)\right] \\
\le& \left[\cL(f_n^*) - \cL_n(f_n^*) \right] + \left[\cL_n(f) - \cL(f)\right] + \| f - h\|^2_{L^2(\mu)}.
\end{align*}
Using $\bE_{\cD_n} [\cL_n(f)] = \cL(f)$ and taking the infimum over $f\in \cF_n$, we get
\begin{equation}\label{temp1}
\bE_{\cD_n} \|f_n^*-h\|_{L^2(\mu)}^2 \le \inf_{f\in \cF_n} \| f - h\|^2_{L^2(\mu)} + \bE_{\cD_n} \left[\cL(f_n^*) - \cL_n(f_n^*) \right].
\end{equation}

Let us denote the collections of sample points and noises by $X_{1:n} =(X_1,\dots,X_n)$ and $\eta_{1:n}=(\eta_1,\dots,\eta_n)$. We can bound the generalization error as follows
\begin{align}
&\bE_{\cD_n} \left[\cL(f_n^*) - \cL_n(f_n^*) \right] \notag\\
=& \bE_{\cD_n} \left[ \| f_n^* - h\|^2_{L^2(\mu)} + V^2 - \left(\frac{1}{n} \sum_{i=1}^n (f_n^*(X_i)- h(X_i))^2 -2 \eta_i(f_n^*(X_i)- h(X_i)) + \eta_i^2\right) \right] \notag\\
=& \bE_{\cD_n}  \left[ \| f_n^* - h\|^2_{L^2(\mu)}  - \frac{1}{n} \sum_{i=1}^n (f_n^*(X_i)- h(X_i))^2 \right] +2 \bE_{\cD_n}\left[ \frac{1}{n} \sum_{i=1}^n \eta_i(f_n^*(X_i)- h(X_i)) \right] \notag\\
\le& \bE_{X_{1:n}} \left[ \sup_{\phi\in \Phi_n} \left(\bE_X[\phi^2(X)] - \frac{1}{n} \sum_{i=1}^n \phi^2(X_i) \right) \right] + 2 \bE_{X_{1:n}} \bE_{\eta_{1:n}} \left[ \sup_{\phi\in \Phi_n} \frac{1}{n} \sum_{i=1}^n \eta_i \phi(X_i) \right], \label{temp2}
\end{align}
where we denote $\Phi_n:= \{f-h:f\in \cF_n\}$. By a standard symmetrization argument (see \cite[Theorem 4.10]{wainwright2019high}), we can bound the first term in (\ref{temp2}) by the Rademacher complexity:
\[
\bE_{X_{1:n}} \left[ \sup_{\phi\in \Phi_n} \left(\bE_X[\phi^2(X)] - \frac{1}{n} \sum_{i=1}^n \phi^2(X_i) \right) \right] \le 2 \bE_{X_{1:n}} \left[ \cR_n(\Phi_n^2(X_{1:n})) \right],
\]
where $\Phi_n^2(X_{1:n}):= \{ (\phi^2(X_1),\dots,\phi^2(X_n)) \in \bR^n: \phi\in \Phi_n \} \subseteq \bR^n$ is the set of function values on the sample points. Recall that we assume $B_n\ge 1$ in case (1) and $B_n\ge \sqrt{2}$ in case (2). Hence, we always have $\|h\|_{L^\infty(\bB^d)} \le \sqrt{2}$ and $\|\phi\|_{L^\infty(\bB^d)} \le 2B_n$ for any $\phi\in \Phi_n$. By the structural properties of Rademacher complexity \cite[Theorem 12]{bartlett2002rademacher}, 
\begin{align*}
\bE_{X_{1:n}} \left[ \cR_n(\Phi_n^2(X_{1:n})) \right] &\le 8 B_n \bE_{X_{1:n}} \left[ \cR_n(\Phi_n(X_{1:n})) \right] \\
&\le 8B_n \left( \bE_{X_{1:n}} \left[ \cR_n(\cF_n(X_{1:n})) \right] + \frac{\|h\|_{L^\infty(\bB^d)}}{\sqrt{n}} \right) \\
&\lesssim \frac{M_n \log n}{\sqrt{n}},
\end{align*}
where we apply Lemma \ref{Rademacher bound} in the last inequality. Note that the second term in (\ref{temp2}) is a Gaussian complexity. We can also bound it by the Rademacher complexity \cite[Lemma 4]{bartlett2002rademacher}:
\[
\bE_{X_{1:n}} \bE_{\eta_{1:n}} \left[ \sup_{\phi\in \Phi_n} \frac{1}{n} \sum_{i=1}^n \eta_i \phi(X_i) \right] \lesssim \bE_{X_{1:n}} \left[ \cR_n(\Phi_n(X_{1:n})) \right] \log n \lesssim \frac{M_n \log n}{\sqrt{n}}.
\]
In summary, we conclude that
\begin{equation}\label{temp3}
\bE_{\cD_n} \left[\cL(f_n^*) - \cL_n(f_n^*) \right] \lesssim \frac{M_n \log n}{\sqrt{n}}.
\end{equation}

If $\cH = \cH^\alpha$ with $\alpha<(d+3)/2$, by Corollary \ref{app overp}, we have
\[
\sup_{h\in \cH^\alpha} \inf_{f\in \cF_n} \|h-f\|_{L^\infty(\bB^d)} \lesssim W_n^{-\frac{\alpha}{d}} \lor M_n^{-\frac{2\alpha}{d+3-2\alpha}}.
\]
Combining with (\ref{temp1}) and (\ref{temp3}), we know that if we choose $M_n \asymp n^{\frac{1}{2} - \frac{2\alpha}{d+3+2\alpha}}$ and $W_n  \gtrsim n^{\frac{d}{d+3+2\alpha}}$, then
\[
\bE_{\cD_n} \|f_n^*-h\|_{L^2(\mu)}^2 \lesssim W_n^{-\frac{2\alpha}{d}} \lor M_n^{-\frac{4\alpha}{d+3-2\alpha}} + \frac{M_n \log n}{\sqrt{n}} \lesssim n^{-\frac{2\alpha}{d+3+2\alpha}} \log n.
\]
Similarly, if $\cH = \cF_{\sigma}(1)$, by Corollary \ref{app overp}, then there exist a constant $M\ge 1$ such that, if $M_n=M$,
\[
\bE_{\cD_n} \|f_n^*-h\|_{L^2(\mu)}^2 \lesssim W_n^{-\frac{d+3}{d}} + \frac{M \log n}{\sqrt{n}}.
\]
Thus, for any $W_n \gtrsim n^{d/(2d+6)}$, we get $\bE_{\cD_n} \|f_n^*-h\|_{L^2(\mu)}^2 \lesssim n^{-1/2} \log n$,
\end{proof}

\subsection{Convolutional neural networks}

In contrast to the vast amount of theoretical studies on fully connected neural networks, there are only a few papers analyzing the performance of convolutional neural networks \cite{zhou2020theory,zhou2020universality,fang2020theory,petersen2020equivalence,lin2022universal,feng2021generalization,mao2023approximating,zhou2024learning}. The recent work \cite{lin2022universal} showed the universal consistency of CNNs for nonparametric regression. In this section, we show how to use our approximation results to analyze the convergence rates of CNNs.

Following \cite{zhou2020universality}, we introduce a sparse convolutional structure on deep neural networks. Let $s\ge 2$ be a fixed integer, which is used to control the filter length. Given a sequence $w=(w_i)_{i\in\bZ}$ on $\bZ$ supported on $\{0,1,\dots,s\}$, the convolution of the filter $w$ with another sequence $x=(x_i)_{i\in \bZ}$ supported on $\{1,\dots,d\}$ is a sequence $w*x$ given by 
\[
(w*x)_i := \sum_{j\in \bZ} w_{i-j} x_j = \sum_{j=1}^d w_{i-j} x_j, \quad i\in \bZ.
\]
Regarding $x$ as a vector of $\bR^d$, this convolution induces a $(d+s) \times d$ Toeplitz type convolutional matrix
\[
A^{w} := (w_{i-j})_{1\le i \le d+s, 1\le j\le d}.
\]
Note that the number of rows of $A^{w}$ is $s$ greater than the number of columns. This leads us to consider deep neural networks of the form $(\ref{deep NN})$ with linearly increasing widths $\{N_\ell = d + \ell s\}_{\ell =0}^L$. We denote by $\CNN(s,L)$ the set of functions that can be parameterized in the form $(\ref{deep NN})$ such that $A^{(\ell)} = A^{w^{(\ell)}}$ for some filter $w^{(\ell)}$ supported on $\{0,1,\dots,s\}$, $0\le \ell \le L-1$, and the biases $b^{(\ell)}$ take the special form
\begin{equation}\label{bias form}
b^{(\ell)} = \left(b^{(\ell)}_1,\dots, b^{(\ell)}_s, b^{(\ell)}_{s+1}, \dots, b^{(\ell)}_{s+1}, b^{(\ell)}_{N_\ell-s+1},\dots, b^{(\ell)}_{N_\ell} \right)^\intercal, \quad 0\le \ell \le L-2,
\end{equation}
with $N_\ell -2s$ repeated components in the middle. By definition, it is easy to see that $\CNN(s,L) \subseteq \NN(d+Ls,L)$. The assumption on the special form (\ref{bias form}) of biases is used to reduce the free parameters in the network. As in \cite{zhou2020universality}, one can compute that the number of free parameters in $\CNN(s,L)$ is $(5s+2)L+2d-2s$, which grows linearly on $L$.

The next proposition shows that all functions in $\NN(N,1)$ can be implemented by CNNs.

\begin{proposition}[\cite{zhou2020universality}]\label{rep shallow by cnn}
If $N,L\in\bN$ satisfy $L\ge \lfloor\frac{Nd}{s-1}+1 \rfloor$, then $\NN(N,1) \subseteq \CNN(s,L)$.
\end{proposition}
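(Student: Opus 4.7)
The plan is, given an arbitrary $f \in \NN(N,1)$ written as $f(x) = \sum_{j=1}^N a_j \sigma(w_j^\intercal x + \beta_j) + \beta_0$ with $w_j \in \bR^d$, to realize $f$ as a function in $\CNN(s,L)$. The idea, following \cite{zhou2020universality}, is to concatenate the $N$ weight vectors $w_1,\ldots,w_N$ into a single long filter of length $Nd$, to factor this filter as a convolution of $L$ short filters each of length $s+1$, and to shift the biases in every hidden layer so that the ReLU operates entirely in its linear regime. The resulting depth-$L$ convolutional stack is then affine in $x$, and a final linear readout with coefficients $a_j$ reproduces $f$.

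The first and main step is the filter factorization. Form $W \in \bR^{Nd}$ by stacking the $w_j$'s, and associate the polynomial $W(z) = \sum_{k=0}^{Nd-1} W_{k+1} z^k$ of degree at most $Nd-1$. Factor $W(z)$ over $\bR$ into linear and irreducible quadratic factors, then greedily bundle these into $L$ subproducts each of degree at most $s$, giving $W(z) = \prod_{\ell=0}^{L-1} w^{(\ell)}(z)$ with $\deg w^{(\ell)} \le s$. Taking inverse transforms yields $W = w^{(L-1)} * \cdots * w^{(0)}$ with each $w^{(\ell)}$ supported on $\{0,1,\ldots,s\}$. The hypothesis $L \ge \lfloor Nd/(s-1) + 1 \rfloor$ is exactly what is needed for this greedy grouping to succeed: irreducible quadratic factors of $W(z)$ cannot be split across groups, so in the worst case one can only guarantee an average per-group degree of $s-1$ rather than $s$. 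This step is the principal technical obstacle and the reason the hypothesis involves $s-1$, not $s$, in the denominator.

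Given the factorization, define the Toeplitz convolution matrices $A^{(\ell)} := A^{w^{(\ell)}} \in \bR^{(d+(\ell+1)s)\times(d+\ell s)}$. The composition $A^{(L-1)}\cdots A^{(0)} x$ then equals $W * x$ (viewed as a vector in $\bR^{d+Ls}$), and by design of the stacking in Step~1 there are $N$ coordinates of this output, at indices $i_1,\ldots,i_N$, that equal $w_1^\intercal x,\ldots, w_N^\intercal x$. To thread these linear composites through the ReLUs inserted between layers, choose the biases $b^{(\ell)}$ at each hidden layer so that the pre-activation $A^{(\ell)}y^{(\ell)} + b^{(\ell)}$ is pointwise nonnegative on $x\in \bB^d$; since $x$ and the filters are all bounded, a finite constant bias suffices. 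The middle rows of $A^{(\ell)}$ share the same filter template and thus admit a common bound, so a single value $b^{(\ell)}_{s+1}$ linearizes all $N_\ell - 2s$ middle coordinates, while the $s$ rows at each boundary may require distinct entries — precisely what the special bias form \eqref{bias form} permits. Under these choices each hidden ReLU acts as the identity plus a known constant shift, so the map through $L$ layers is affine in $x$. Finally, choose the output pair $(A^{(L)}, b^{(L)})$ to extract the coordinates at positions $i_1,\ldots,i_N$ with weights $a_1,\ldots,a_N$, absorb the original biases $\beta_j$ together with the accumulated constant shifts into the output bias, and add $\beta_0$. The resulting network belongs to $\CNN(s,L)$ and evaluates to $f$, which proves $\NN(N,1) \subseteq \CNN(s,L)$.
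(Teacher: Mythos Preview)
Your construction has a genuine gap: by linearizing the ReLU at \emph{every} hidden layer, you force the entire depth-$L$ network to be an affine function of $x$. The final output layer $(A^{(L)},b^{(L)})$ is linear by definition and cannot introduce a nonlinearity, so the network you build computes something of the form $\sum_j a_j(w_j^\intercal x + C_j) + b^{(L)}$, which is affine and in general not equal to $f(x) = \sum_j a_j \sigma(w_j^\intercal x + \beta_j) + \beta_0$. Absorbing $\beta_j$ into the output bias does not help, since the ReLU $\sigma(\cdot)$ must be applied to $w_j^\intercal x + \beta_j$, not added afterward.

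The fix, and what the paper actually does, is to linearize only the first $L-1$ ReLUs (layers $\ell = 0,\ldots,L-2$), so that $f^{(L-1)}(x) = A^{w^{(L-2)}}\cdots A^{w^{(0)}} x + B^{(L-2)}$ is affine with all components strictly positive on $\bB^d$. At the last hidden layer one applies $A^{w^{(L-1)}}$ and chooses $b^{(L-1)}$ so that, at the designated indices $i_j$, the pre-activation equals exactly $w_j^\intercal x + \beta_j$; the ReLU there then produces $\sigma(w_j^\intercal x + \beta_j)$, and the linear output layer reads these off with weights $a_j$. Note this requires $b^{(L-1)}$ to take arbitrary values at those $N$ coordinates, which is permitted precisely because the special form \eqref{bias form} is imposed only for $0\le \ell \le L-2$, not for $\ell = L-1$. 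Your polynomial-factorization argument for the filter decomposition and your explanation of the $s-1$ denominator are correct and match the paper's approach; only the handling of the final hidden layer needs to be repaired.
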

\begin{proof}
This result is proven in \cite[Proof of Theorem 2]{zhou2020universality}. We only give a sketch of the construction for completeness. Any function $f\in \NN(N,1)$ can be written as $f(x) = \sum_{i=1}^N c_i \sigma(a_i^\intercal x+b_i) + c_0$, where $a_i\in \bR^d$ and $b_i,c_i\in \bR$. Define a sequence $v$ supported on $\{0,\dots,Nd-1\}$ by stacking the vectors $a_1,\dots, a_N$ (with components reversed) by
\[
(v_{Nd-1},\dots,v_0) = (a_N^\intercal,\dots,a_1^\intercal).
\]
Applying \cite[Theorem 3]{zhou2020universality} to the sequence $v$, we can construct filters $\{w^{(\ell)}\}_{\ell=0}^{L-1}$ supported on $\{0,1,\dots,s\}$ such that $v=w^{(L-1)}*w^{(L-2)}*\dots *w^{(0)}$, which implies $A^{w^{(L-1)}} \cdots A^{w^{(0)}} = A^v \in \bR^{(d+Ls)\times d}$. Note that, by definition, for $i=1,\dots,N$, the $id$-th row of $A^v$ is exactly $a_i^\intercal$. Then, for $\ell=0,\dots,L-2$, we can choose $b^{(\ell)}$ satisfying (\ref{bias form}) such that $f^{(\ell+1)}(x) = A^{w^{(\ell)}} \cdots A^{w^{(0)}}x+B^{(\ell)}$, where $B^{(\ell)}>0$ is a sufficiently large constant that makes the components of $f^{(\ell+1)}(x)$ positive for all $x\in \bB^d$. Finally, we can construct $b^{(L-1)}$ such that $f^{(L)}_k(x)=\sigma(a_i^\intercal x+b_i)$ for $i=1,\dots,N$ and $k=id$, which implies $f\in \CNN(s,L)$.
\end{proof}

Note that Proposition \ref{rep shallow by cnn} shows each shallow neural network can be represent by a CNN, with the same order of number of parameters. As a corollary, we obtain approximation rates for CNNs.

\begin{corollary}\label{app cnn}
Let $s\ge 2$ be an integer. 
\begin{enumerate}[label=\textnormal{(\arabic*)},parsep=0pt]
\item For $\cH^\alpha$ with $0<\alpha<(d+3)/2$, we have
\[
\sup_{h\in \cH^\alpha} \inf_{f\in \CNN(s,L)} \|h-f\|_{L^\infty(\bB^d)} \lesssim L^{-\frac{\alpha}{d}}.
\]

\item For $\cF_{\sigma}(1)$, we have
\[
\sup_{h \in \cF_{\sigma}(1)} \inf_{f\in \CNN(s,L)} \|h-f\|_{L^\infty(\bB^d)} \lesssim L^{-\frac{1}{2} - \frac{3}{2d}}.
\]
\end{enumerate}
\end{corollary}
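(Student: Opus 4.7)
The plan is to reduce Corollary \ref{app cnn} to the shallow-network bounds already established (Corollary \ref{app neuron corollary} and Lemma \ref{app neuron}) by invoking Proposition \ref{rep shallow by cnn}, which embeds $\NN(N,1)$ into $\CNN(s,L)$ once $L$ is at least of order $Nd/(s-1)$. Since $s$ and $d$ are fixed integers, this means that any approximation by a shallow ReLU network with $N$ neurons can be realized by a CNN whose depth is comparable to $N$, i.e. $N \asymp L$.

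For part (1), given $h\in\cH^\alpha$ with $\alpha<(d+3)/2$, I would first apply Corollary \ref{app neuron corollary}(3) with $k=1$: choosing the weight bound $M \asymp N^{(d+3-2\alpha)/(2d)}$ yields an element $f\in \cF_\sigma(N,M)\subseteq \NN(N,1)$ with $\|h-f\|_{L^\infty(\bB^d)} \lesssim N^{-\alpha/d}$. I would then set $N := \lfloor L(s-1)/d \rfloor$ so that $L\ge \lfloor Nd/(s-1)+1\rfloor$, which by Proposition \ref{rep shallow by cnn} places $f$ in $\CNN(s,L)$. Because $s$ and $d$ are treated as constants, $N \asymp L$ and the bound becomes $L^{-\alpha/d}$.

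For part (2), the argument is identical, but based on Lemma \ref{app neuron}: for each $h\in\cF_\sigma(1)$ there exists $f\in\cF_\sigma(N,1)\subseteq \NN(N,1)$ with $\|h-f\|_{L^\infty(\bB^d)} \lesssim N^{-1/2 - 3/(2d)}$; choosing $N \asymp L$ as above and applying Proposition \ref{rep shallow by cnn} gives $f\in \CNN(s,L)$ with the stated rate.

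The only mild technicality, rather than a real obstacle, is bookkeeping of the constants hidden in $\lesssim$: one must absorb the factor $(s-1)/d$ from $N\asymp L(s-1)/d$ and check that the norm constraint $M$ produced by Corollary \ref{app neuron corollary}(3) is consistent with the (unconstrained) CNN class $\CNN(s,L)$. Since $\CNN(s,L)$ imposes no weight-magnitude restriction, the constraint on $M$ is harmless. No new approximation argument is needed beyond stitching together the shallow-network bound with the shallow-to-CNN embedding.
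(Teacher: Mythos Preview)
Your proposal is correct and follows essentially the same approach as the paper's proof: both invoke Proposition \ref{rep shallow by cnn} to obtain $\cF_{\sigma}(N,M)\subseteq \NN(N,1)\subseteq \CNN(s,L)$ with $N\asymp L$, then apply Corollary \ref{app neuron corollary} for part (1) and Lemma \ref{app neuron} for part (2). The only cosmetic difference is that the paper fixes $N$ and sets $L=\lfloor Nd/(s-1)+1\rfloor$, whereas you fix $L$ and choose $N=\lfloor L(s-1)/d\rfloor$; both are equivalent since $s,d$ are constants.
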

\begin{proof}
For any $N\in \bN$, we take $L=\lfloor\frac{Nd}{s-1}+1 \rfloor \asymp N$, then $\cF_{\sigma}(N,M) \subseteq \NN(N,1) \subseteq  \CNN(s,L)$ for any $M>0$, by Proposition \ref{rep shallow by cnn}. (1) follows from Corollary \ref{app neuron corollary} and (2) is from Lemma \ref{app neuron}.
\end{proof}

Since the number of parameters in $\CNN(s,L)$ is approximately $L$, the rate $\cO(L^{-\alpha/d})$ in part (1) of Corollary \ref{app cnn} is the same as the rate in \cite{yarotsky2017error} for fully connected neural networks. However, \cite{yarotsky2018optimal,lu2021deep} showed that this rate can be improved to $\cO(L^{-2\alpha/d})$ for fully connected neural networks by using the bit extraction technique \cite{bartlett2019nearly}. It would be interesting to see whether this rate also holds for $\CNN(s,L)$.

As in Theorem \ref{regression shallow}, we use Lemma \ref{gen bound by covering} to decompose the error and bound the approximation error by Corollary \ref{app cnn}. The covering number is bounded again by pseudo-dimension.

\begin{theorem}\label{regression cnn}
Let $f_n^*$ be the estimator (\ref{ERM}) with $\cF_n = \CNN(s,L_n)$, where $s\ge 2$ is a fixed integer, and set $B_n = c_1\log n$ for some constant $c_1>0$. 
\begin{enumerate}[label=\textnormal{(\arabic*)},parsep=0pt]
\item If $\cH = \cH^\alpha$ with $\alpha<(d+3)/2$, we choose
\[
L_n \asymp n^{\frac{d}{2d+2\alpha}}, 
\]
then
\[
\bE_{\cD_n} \|\cT_{B_n}f_n^*-h\|_{L^2(\mu)}^2 \lesssim n^{-\frac{\alpha}{d+\alpha}} (\log n)^4.
\]

\item If $\cH = \cF_{\sigma}(1)$, we choose
\[
L_n \asymp n^{\frac{d}{3d+3}},
\]
then
\[
\bE_{\cD_n} \|\cT_{B_n}f_n^*-h\|_{L^2(\mu)}^2 \lesssim  n^{-\frac{d+3}{3d+3}} (\log n)^4.
\]
\end{enumerate}
\end{theorem}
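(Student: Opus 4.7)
The plan is to mirror the proof of Theorem \ref{regression shallow}, replacing the shallow class $\cF_\sigma(N_n,M_n)$ with $\CNN(s,L_n)$. I would first apply Lemma \ref{gen bound by covering} to decompose the excess risk of the ERM into an approximation term $\inf_{f\in \cF_n}\|f-h\|_{L^2(\mu)}^2$ and a generalization term controlled by $\log\cN(n^{-1}B_n^{-1},\cT_{B_n}\cF_n,\|\cdot\|_{L^1(X_{1:n})})$.

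For the generalization piece, I would invoke Haussler's inequality \eqref{cover bound} to reduce the covering number to pseudo-dimension, and then bound $\Pdim(\cT_{B_n}\CNN(s,L_n))$ via the result of Bartlett et al.\ for ReLU networks. Because each element of $\CNN(s,L_n)$ is a depth-$L_n$ ReLU network with only $(5s+2)L_n+2d-2s = \cO(L_n)$ free parameters (after accounting for the convolutional weight sharing and the bias pattern \eqref{bias form}), the pseudo-dimension is $\cO(L_n^2 \log L_n)$, so the generalization contribution is of order $L_n^2 (\log L_n)(\log n)^2 / n$.

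For the approximation piece, I would plug in Corollary \ref{app cnn}, giving $\inf_{f\in \CNN(s,L_n)}\|h-f\|_{L^\infty(\bB^d)}^2 \lesssim L_n^{-2\alpha/d}$ when $h\in\cH^\alpha$ and $\lesssim L_n^{-(d+3)/d}$ when $h\in \cF_\sigma(1)$. Combining the two contributions, the total error is bounded by
\[
L_n^{-2\alpha/d} \;+\; \frac{L_n^2 (\log L_n)(\log n)^2}{n} \qquad \text{or} \qquad L_n^{-(d+3)/d} \;+\; \frac{L_n^2 (\log L_n)(\log n)^2}{n},
\]
and balancing the two terms gives $L_n \asymp n^{d/(2d+2\alpha)}$ in case (1), producing the rate $n^{-\alpha/(d+\alpha)}$ up to $(\log n)^4$, and $L_n \asymp n^{d/(3d+3)}$ in case (2), producing $n^{-(d+3)/(3d+3)}$ up to $(\log n)^4$.

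The main obstacle is ensuring that the pseudo-dimension bound scales with the number of \emph{free} scalars $\cO(L_n)$ rather than with the inflated parameter count of the ambient embedding $\CNN(s,L_n) \subseteq \NN(d+L_n s,L_n)$; this is essentially automatic because the Bartlett et al.\ bound counts tunable parameters, so the convolutional weight tying is respected. All remaining steps are direct consequences of the cited results combined with the arithmetic above.
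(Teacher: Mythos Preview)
Your proposal is correct and follows essentially the same route as the paper: apply Lemma~\ref{gen bound by covering}, bound the covering number via Haussler's inequality~\eqref{cover bound} and the Bartlett et al.\ pseudo-dimension estimate (using that $\CNN(s,L_n)$ has $(5s+2)L_n+2d-2s=\cO(L_n)$ free parameters, giving $\Pdim\lesssim L_n^2\log L_n$), plug in Corollary~\ref{app cnn} for the approximation term, and balance. One minor bookkeeping slip: the covering-number bound contributes an extra $\log(B_n/\epsilon)=\log(nB_n^2)\asymp\log n$ factor on top of the $(\log n)^2$ prefactor in Lemma~\ref{gen bound by covering}, so the generalization piece is $L_n^2(\log L_n)(\log n)^3/n$, which after $\log L_n\asymp\log n$ indeed yields your stated $(\log n)^4$.
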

\begin{proof}
The proof is the same as Theorem \ref{regression shallow} and \cite{zhou2024learning}. We can use (\ref{cover bound}) to bound the covering number by the pseudo-dimension. For convolutional neural networks, \cite{bartlett2019nearly} gave the following estimate of the pseudo-dimension:
\[
\Pdim(\cT_{B_n}\cF_n) \lesssim L_n p(s,L_n) \log(q(s,L_n)) \lesssim L_n^2 \log L_n,
\]
where $p(s,L_n)=(5s+2)L_n+2d-2s \lesssim L_n$ and $q(s,L_n)\le L_n (d+sL_n) \lesssim L_n^2$ are the numbers of parameters and neurons of the network $\CNN(s,L_n)$, respectively. Therefore,
\[
\log \cN(\epsilon, \cT_{B_n}\cF_n,\|\cdot\|_{L^1(X_{1:n})}) \lesssim L_n^2 \log (L_n) \log(B_n/\epsilon).
\] 

Applying Lemma \ref{gen bound by covering} and Corollary \ref{app cnn}, if $\cH= \cH^\alpha$ with $\alpha<(d+3)/2$, then
\[
\bE_{\cD_n} \|\cT_{B_n}f_n^*-h\|_{L^2(\mu)}^2 \lesssim \frac{L_n^2 \log (L_n) (\log n)^3}{n} + L_n^{-\frac{2\alpha}{d}}.
\]
We choose $L_n \asymp n^{d/(2d+2\alpha)}$, then $\bE_{\cD_n} \|\cT_{B_n}f_n^*-h\|_{L^2(\mu)}^2 \lesssim n^{-\alpha/(d+\alpha)} (\log n)^4$. Similarly, if $\cH= \cF_{\sigma}(1)$, then
\[
\bE_{\cD_n} \|\cT_{B_n}f_n^*-h\|_{L^2(\mu)}^2 \lesssim \frac{L_n^2 \log (L_n) (\log n)^3}{n} + L_n^{-\frac{d+3}{d}}.
\]
We choose $L_n \asymp n^{d/(3d+3)}$, then $\bE_{\cD_n} \|\cT_{B_n}f_n^*-h\|_{L^2(\mu)}^2 \lesssim n^{-(d+3)/(3d+3)} (\log n)^4$.
\end{proof}

Finally, we note that the recent paper \cite{zhou2024learning} also studied the convergence of CNNs and proved the rate $\cO(n^{-1/3}(\log n)^2)$ for $\cH^\alpha$ with $\alpha>(d+4)/2$. The convergence rate we obtained in Theorem \ref{regression cnn} for $\cF_{\sigma}(1)$, which includes $\cH^\alpha$ with $\alpha>(d+3)/2$ by Theorem \ref{app norm}, is slightly better than their rate.

\section{Conclusion}\label{sec: conclusion}

This paper has established approximation bounds for shallow ReLU$^k$ neural networks. We showed how to use these bounds to derive approximation rates for (deep or shallow) neural networks with constraints on the weights and convolutional neural networks. We also applied the approximation results to study the convergence rates of nonparametric regression using neural networks. In particular, we established the optimal convergence rates for shallow neural networks and showed that over-parameterized neural networks can achieve nearly optimal rates. 

There are a few interesting questions we would like to propose for future research. First, for approximation by shallow neural networks, we establish the optimal rate in the supremum norm by using the results of \cite{siegel2023optimal} (Lemma \ref{app neuron}). The paper \cite{siegel2023optimal} actually showed that approximation bounds similar to Lemma \ref{app neuron} also hold in Sobolev norms. We think it is a promising direction to extend our approximation results in the supremum norm (Theorem \ref{app norm} and Corollary \ref{app neuron corollary}) to the Sobolev norms. Second, it is unclear whether over-parameterized neural networks can achieve the optimal rate for learning functions in $\cH^\alpha$. It seems that refined generalization error analysis is needed. Finally, it would be interesting to extend the theory developed in this paper to general activation functions and study how the results are affected by the activation functions.

\section*{Acknowledgments}

The work described in this paper was partially supported by InnoHK initiative, The Government of the HKSAR, Laboratory for AI-Powered Financial Technologies, the Research Grants Council of Hong Kong [Projects No. CityU 11306220 and 11308020] and National Natural Science Foundation of China [Project No. 12371103] when the second author worked at City University of Hong Kong. We thank the referees for their helpful comments and suggestions on the paper.

\bibliographystyle{plain}
\bibliography{Ref}

\begin{thebibliography}{10}

\bibitem{achour2022general}
El~Mehdi Achour, Armand Foucault, S\'{e}bastien Gerchinovitz, and Fran\c{c}ois
  Malgouyres.
\newblock A general approximation lower bound in {L}\^{}p norm, with
  applications to feed-forward neural networks.
\newblock In {\em Advances in Neural Information Processing Systems},
  volume~35, pages 22396--22408. Curran Associates, Inc., 2022.

\bibitem{anthony2009neural}
Martin Anthony and Peter~L. Bartlett.
\newblock {\em Neural network learning: Theoretical foundations}.
\newblock Cambridge University Press, 2009.

\bibitem{bach2017breaking}
Francis Bach.
\newblock Breaking the curse of dimensionality with convex neural networks.
\newblock {\em Journal of Machine Learning Research}, 18(19):1--53, 2017.

\bibitem{barron1993universal}
Andrew~R. Barron.
\newblock Universal approximation bounds for superpositions of a sigmoidal
  function.
\newblock {\em IEEE Transactions on Information Theory}, 39(3):930--945, 1993.

\bibitem{bartlett2017spectrally}
Peter~L. Bartlett, Dylan~J. Foster, and Matus Telgarsky.
\newblock Spectrally-normalized margin bounds for neural networks.
\newblock In {\em Advances in Neural Information Processing Systems},
  volume~30, pages 6240--6249. Curran Associates, Inc., 2017.

\bibitem{bartlett2019nearly}
Peter~L. Bartlett, Nick Harvey, Christopher Liaw, and Abbas Mehrabian.
\newblock Nearly-tight {VC}-dimension and {P}seudodimension bounds for
  piecewise linear neural networks.
\newblock {\em Journal of Machine Learning Research}, 20(63):1--17, 2019.

\bibitem{bartlett2002rademacher}
Peter~L. Bartlett and Shahar Mendelson.
\newblock Rademacher and {G}aussian complexities: Risk bounds and structural
  results.
\newblock {\em Journal of Machine Learning Research}, 3:463--482, 2002.

\bibitem{bartolucci2023understanding}
Francesca Bartolucci, Ernesto~De Vito, Lorenzo Rosasco, and Stefano Vigogna.
\newblock Understanding neural networks with reproducing kernel {Banach}
  spaces.
\newblock {\em Applied and Computational Harmonic Analysis}, 62:194--236, 2023.

\bibitem{bauer2019deep}
Benedikt Bauer and Michael Kohler.
\newblock On deep learning as a remedy for the curse of dimensionality in
  nonparametric regression.
\newblock {\em The Annals of Statistics}, 47(4):2261--2285, 2019.

\bibitem{bourgain1989approximation}
Jean Bourgain, Joram Lindenstrauss, and Vitali Milman.
\newblock Approximation of zonoids by zonotopes.
\newblock {\em Acta Mathematica}, 162:73--141, 1989.

\bibitem{cybenko1989approximation}
George Cybenko.
\newblock Approximation by superpositions of a sigmoidal function.
\newblock {\em Mathematics of Control, Signals, and Systems}, 2(4):303--314,
  1989.

\bibitem{dai2013approximation}
Feng Dai and Yuan Xu.
\newblock {\em Approximation theory and harmonic analysis on spheres and
  balls}, volume~23.
\newblock Springer, 2013.

\bibitem{devore1989optimal}
Ronald~A. DeVore, Ralph Howard, and Charles Micchelli.
\newblock Optimal nonlinear approximation.
\newblock {\em Manuscripta Mathematica}, 63(4):469--478, 1989.

\bibitem{devore1993constructive}
Ronald~A. DeVore and George~G. Lorentz.
\newblock {\em Constructive approximation}, volume 303.
\newblock Springer Science \& Business Media, 1993.

\bibitem{ditzian2006measures}
Zeev Ditzian.
\newblock Measures of smoothness on the sphere.
\newblock In {\em Frontiers in Interpolation and Approximation}, pages 75--91.
  Chapman and Hall/{CRC}, 2006.

\bibitem{fang2020theory}
Zhiying Fang, Han Feng, Shuo Huang, and Ding-Xuan Zhou.
\newblock Theory of deep convolutional neural networks {II}: {Spherical}
  analysis.
\newblock {\em Neural Networks}, 131:154--162, 2020.

\bibitem{fefferman2006whitneys}
Charles Fefferman.
\newblock Whitney's extension problem for {$C^m$}.
\newblock {\em Annals of Mathematics}, 164(1):313--359, 2006.

\bibitem{fefferman2009extension}
Charles Fefferman.
\newblock Extension of {$C^{m,\omega}$}-smooth functions by linear operators.
\newblock {\em Revista Matem{\'{a}}tica Iberoamericana}, 25(1):1--48, 2009.

\bibitem{fefferman2020fitting}
Charles Fefferman and Arie Israel.
\newblock {\em Fitting smooth functions to data}.
\newblock American Mathematical Society, 2020.

\bibitem{feng2021generalization}
Han Feng, Shuo Huang, and Ding-Xuan Zhou.
\newblock Generalization analysis of {CNNs} for classification on spheres.
\newblock {\em IEEE Transactions on Neural Networks and Learning Systems},
  34(9):6200--6213, 2023.

\bibitem{golowich2020size}
Noah Golowich, Alexander Rakhlin, and Ohad Shamir.
\newblock Size-independent sample complexity of neural networks.
\newblock {\em Information and Inference: A Journal of the {IMA}},
  9(2):473--504, 2020.

\bibitem{haussler1992decision}
David Haussler.
\newblock Decision theoretic generalizations of the {PAC} model for neural net
  and other learning applications.
\newblock {\em Information and Computation}, 100(1):78--150, 1992.

\bibitem{hornik1991approximation}
Kurt Hornik.
\newblock Approximation capabilities of multilayer feedforward networks.
\newblock {\em Neural Networks}, 4(2):251--257, 1991.

\bibitem{jiao2023approximation}
Yuling Jiao, Yang Wang, and Yunfei Yang.
\newblock Approximation bounds for norm constrained neural networks with
  applications to regression and {GANs}.
\newblock {\em Applied and Computational Harmonic Analysis}, 65:249--278, 2023.

\bibitem{klusowski2018approximation}
Jason~M. Klusowski and Andrew~R. Barron.
\newblock Approximation by combinations of {ReLU} and squared {ReLU} ridge
  functions with {$l^1$} and {$l^0$} controls.
\newblock {\em IEEE Transactions on Information Theory}, 64(12):7649--7656,
  2018.

\bibitem{kohler2005adaptive}
Michael Kohler and Adam Krzy{\.z}ak.
\newblock Adaptive regression estimation with multilayer feedforward neural
  networks.
\newblock {\em Journal of Nonparametric Statistics}, 17(8):891--913, 2005.

\bibitem{kohler2021rate}
Michael Kohler and Sophie Langer.
\newblock On the rate of convergence of fully connected deep neural network
  regression estimates.
\newblock {\em The Annals of Statistics}, 49(4):2231--2249, 2021.

\bibitem{lecun2015deep}
Yann LeCun, Yoshua Bengio, and Geoffrey~E. Hinton.
\newblock Deep learning.
\newblock {\em Nature}, 521(7553):436--444, 2015.

\bibitem{ledoux1991probability}
Michel Ledoux and Michel Talagrand.
\newblock {\em Probability in {Banach} spaces: isoperimetry and processes}.
\newblock Springer, 1991.

\bibitem{lin2022universal}
Shao-Bo Lin, Kaidong Wang, Yao Wang, and Ding-Xuan Zhou.
\newblock Universal consistency of deep convolutional neural networks.
\newblock {\em IEEE Transactions on Information Theory}, 68(7):4610--4617,
  2022.

\bibitem{lu2021deep}
Jianfeng Lu, Zuowei Shen, Haizhao Yang, and Shijun Zhang.
\newblock Deep network approximation for smooth functions.
\newblock {\em SIAM Journal on Mathematical Analysis}, 53(5):5465--5506, 2021.

\bibitem{ma2022uniform}
Limin Ma, Jonathan~W. Siegel, and Jinchao Xu.
\newblock Uniform approximation rates and metric entropy of shallow neural
  networks.
\newblock {\em Research in the Mathematical Sciences}, 9(3):46, 2022.

\bibitem{maiorov1999degree}
Vitaly Maiorov and Joel Ratsaby.
\newblock On the degree of approximation by manifolds of finite
  pseudo-dimension.
\newblock {\em Constructive Approximation}, 15(2):291--300, 1999.

\bibitem{makovoz1996random}
Yuly Makovoz.
\newblock Random approximants and neural networks.
\newblock {\em Journal of Approximation Theory}, 85(1):98--109, 1996.

\bibitem{mao2023approximating}
Tong Mao, Zhongjie Shi, and Ding-Xuan Zhou.
\newblock Approximating functions with multi-features by deep convolutional
  neural networks.
\newblock {\em Analysis and Applications}, 21(01):93--125, 2023.

\bibitem{mao2023rates}
Tong Mao and Ding-Xuan Zhou.
\newblock Rates of approximation by {ReLU} shallow neural networks.
\newblock {\em Journal of Complexity}, 79:101784, 2023.

\bibitem{matousek1996improved}
Ji{\v{r}}{\'{\i}} Matou{\v{s}}ek.
\newblock Improved upper bounds for approximation by zonotopes.
\newblock {\em Acta Mathematica}, 177(1):55--73, 1996.

\bibitem{mccaffrey1994convergence}
Daniel~F. McCaffrey and A.~Ronald Gallant.
\newblock Convergence rates for single hidden layer feedforward networks.
\newblock {\em Neural Networks}, 7(1):147--158, 1994.

\bibitem{mhaskar1996neural}
Hrushikesh~N. Mhaskar.
\newblock Neural networks for optimal approximation of smooth and analytic
  functions.
\newblock {\em Neural computation}, 8(1):164--177, 1996.

\bibitem{mohri2018foundations}
Mehryar Mohri, Afshin Rostamizadeh, and Ameet Talwalkar.
\newblock {\em Foundations of machine learning}.
\newblock {MIT} Press, second edition, 2018.

\bibitem{nakada2020adaptive}
Ryumei Nakada and Masaaki Imaizumi.
\newblock Adaptive approximation and generalization of deep neural network with
  intrinsic dimensionality.
\newblock {\em Journal of Machine Learning Research}, 21(174):1--38, 2020.

\bibitem{neyshabur2018pac}
Behnam Neyshabur, Srinadh Bhojanapalli, and Nathan Srebro.
\newblock A {PAC-Bayesian} approach to spectrally-normalized margin bounds for
  neural networks.
\newblock In {\em 6th International Conference on Learning Representations},
  2018.

\bibitem{neyshabur2015norm}
Behnam Neyshabur, Ryota Tomioka, and Nathan Srebro.
\newblock Norm-based capacity control in neural networks.
\newblock In {\em Proceedings of the 28th Conference on Learning Theory},
  volume~40, pages 1376--1401. PMLR, 2015.

\bibitem{ongie2020function}
Greg Ongie, Rebecca Willett, Daniel Soudry, and Nathan Srebro.
\newblock A function space view of bounded norm infinite width {ReLU} nets: The
  multivariate case.
\newblock In {\em 8th International Conference on Learning Representations},
  2020.

\bibitem{parhi2022what}
Rahul Parhi and Robert~D. Nowak.
\newblock What kinds of functions do deep neural networks learn? insights from
  variational spline theory.
\newblock {\em {SIAM} Journal on Mathematics of Data Science}, 4(2):464--489,
  2022.

\bibitem{parhi2023minimax}
Rahul Parhi and Robert~D. Nowak.
\newblock Near-minimax optimal estimation with shallow {ReLU} neural networks.
\newblock {\em IEEE Transactions on Information Theory}, 69(2):1125--1140,
  2023.

\bibitem{petersen2020equivalence}
Philipp Petersen and Felix Voigtlaender.
\newblock Equivalence of approximation by convolutional neural networks and
  fully-connected networks.
\newblock {\em Proceedings of the American Mathematical Society},
  148(4):1567--1581, 2020.

\bibitem{pinkus1999approximation}
Allan Pinkus.
\newblock Approximation theory of the {MLP} model in neural networks.
\newblock {\em Acta Numerica}, 8:143--195, 1999.

\bibitem{pisier1981remarques}
Gilles Pisier.
\newblock Remarques sur un r{\'e}sultat non publi{\'e} de {B. Maurey}.
\newblock {\em S{\'e}minaire d'Analyse fonctionnelle (dit" Maurey-Schwartz")},
  pages 1--12, 1981.

\bibitem{rustamov1993equivalence}
Kh.~P. Rustamov.
\newblock On equivalence of different moduli of smoothness on the sphere.
\newblock {\em Trudy Matematicheskogo Instituta im. V. A. Steklova},
  204:274--304, 1993.

\bibitem{savarese2019how}
Pedro Savarese, Itay Evron, Daniel Soudry, and Nathan Srebro.
\newblock How do infinite width bounded norm networks look in function space?
\newblock In {\em Proceedings of the 32nd Conference on Learning Theory},
  volume~99, pages 2667--2690. PMLR, 2019.

\bibitem{schmidthieber2020nonparametric}
Johannes Schmidt-Hieber.
\newblock Nonparametric regression using deep neural networks with {ReLU}
  activation function.
\newblock {\em The Annals of Statistics}, 48(4):1875--1897, 2020.

\bibitem{shalevshwartz2014understanding}
Shai Shalev-Shwartz and Shai Ben-David.
\newblock {\em Understanding machine learning: From theory to algorithms}.
\newblock Cambridge University Press, 2014.

\bibitem{shen2020deep}
Zuowei Shen, Haizhao Yang, and Shijun Zhang.
\newblock Deep network approximation characterized by number of neurons.
\newblock {\em Communications in Computational Physics}, 28(5):1768--1811,
  2020.

\bibitem{siegel2023optimal}
Jonathan~W. Siegel.
\newblock Optimal approximation of zonoids and uniform approximation by shallow
  neural networks.
\newblock {\em arXiv: 2307.15285}, 2023.

\bibitem{siegel2020approximation}
Jonathan~W. Siegel and Jinchao Xu.
\newblock Approximation rates for neural networks with general activation
  functions.
\newblock {\em Neural Networks}, 128:313--321, 2020.

\bibitem{siegel2022sharp}
Jonathan~W. Siegel and Jinchao Xu.
\newblock Sharp bounds on the approximation rates, metric entropy, and n-widths
  of shallow neural networks.
\newblock {\em Foundations of Computational Mathematics}, pages 1--57, 2022.

\bibitem{siegel2021characterization}
Jonathan~W. Siegel and Jinchao Xu.
\newblock Characterization of the variation spaces corresponding to shallow
  neural networks.
\newblock {\em Constructive Approximation}, 2023.

\bibitem{stone1982optimal}
Charles~J. Stone.
\newblock Optimal global rates of convergence for nonparametric regression.
\newblock {\em The Annals of Statistics}, 10(4):1040--1053, 1982.

\bibitem{wainwright2019high}
Martin~J. Wainwright.
\newblock {\em High-dimensional statistics: A non-asymptotic viewpoint},
  volume~48.
\newblock Cambridge University Press, 2019.

\bibitem{yang1999information}
Yuhong Yang and Andrew Barron.
\newblock Information-theoretic determination of minimax rates of convergence.
\newblock {\em The Annals of Statistics}, 27(5):1564--1599, 1999.

\bibitem{yang2022learning}
Yunfei Yang.
\newblock {\em Learning distributions by generative adversarial networks:
  approximation and generalization}.
\newblock PhD thesis, The Hong Kong University of Science and Technology, 2022.

\bibitem{yang2022approximation}
Yunfei Yang, Zhen Li, and Yang Wang.
\newblock Approximation in shift-invariant spaces with deep {ReLU} neural
  networks.
\newblock {\em Neural Networks}, 153:269--281, 2022.

\bibitem{yarotsky2017error}
Dmitry Yarotsky.
\newblock Error bounds for approximations with deep {ReLU} networks.
\newblock {\em Neural Networks}, 94:103--114, 2017.

\bibitem{yarotsky2018optimal}
Dmitry Yarotsky.
\newblock Optimal approximation of continuous functions by very deep {ReLU}
  networks.
\newblock In {\em Proceedings of the 31st Conference on Learning Theory},
  volume~75, pages 639--649. PMLR, 2018.

\bibitem{zhou2020theory}
Ding-Xuan Zhou.
\newblock Theory of deep convolutional neural networks: Downsampling.
\newblock {\em Neural Networks}, 124:319--327, 2020.

\bibitem{zhou2020universality}
Ding-Xuan Zhou.
\newblock Universality of deep convolutional neural networks.
\newblock {\em Applied and Computational Harmonic Analysis}, 48(2):787--794,
  2020.

\bibitem{zhou2024learning}
Tian-Yi Zhou and Xiaoming Huo.
\newblock Learning ability of interpolating deep convolutional neural networks.
\newblock {\em Applied and Computational Harmonic Analysis}, 68:101582, 2024.

\end{thebibliography}
\end{document}